\theoremstyle{plain}
\newtheorem{theorem}{Theorem}[section]
\newtheorem{proposition}[theorem]{Proposition}
\newtheorem{lemma}[theorem]{Lemma}
\theoremstyle{definition}
\theoremstyle{remark}
\title{SplitLoRA: Balancing Stability and Plasticity 
in Continual Learning 
Through Gradient Space Splitting}
\author{
Haomiao Qiu\textsuperscript{1,2}\hspace{0.5em}, 
Miao Zhang\textsuperscript{1}\thanks{Co-corresponding Authors.}\hspace{0.5em}, 
Ziyue Qiao\textsuperscript{2}\footnotemark[1]\hspace{0.5em}, 
Weili Guan\textsuperscript{1}, 
\textbf{Min Zhang\textsuperscript{1}, Liqiang Nie\textsuperscript{1}} \\
\textsuperscript{1} Harbin Institute of Technology (Shenzhen) \\
\textsuperscript{2} Great Bay University \\
\texttt{24B951058@stu.hit.edu.cn, zhangmiao@hit.edu.cn, zyqiao@gbu.edu.cn} \\
\texttt{honeyguan@gmail.com, zhangmin2021@hit.edu.cn, nieliqiang@gmail.com} \\
}
\begin{document}

\maketitle

\begin{abstract}
Continual Learning (CL) requires a model to learn multiple tasks in sequence while maintaining both stability—preserving knowledge from previously learned tasks, and plasticity—effectively learning new tasks. Gradient projection has emerged as an effective and popular paradigm in CL, where it partitions the gradient space of previously learned tasks into two orthogonal subspaces: a primary subspace and a minor subspace. New tasks are learned effectively within the minor subspace, thereby reducing interference with previously acquired knowledge. However, existing Gradient Projection methods struggle to achieve an optimal balance between plasticity and stability, as it is hard to appropriately partition the gradient space. In this work, we consider a continual learning paradigm based on Low-Rank Adaptation (LoRA), which has gained considerable attention due to its efficiency and wide applicability, and propose a novel approach for continual learning, called SplitLoRA. We first provide a theoretical analysis of how subspace partitioning affects model stability and plasticity. Informed by this analysis, we then introduce an effective method that derives the optimal partition of the gradient space for previously learned tasks. This approach effectively balances stability and plasticity in continual learning. Experimental results on multiple datasets demonstrate that the proposed method achieves state-of-the-art performance. 
\end{abstract}
% The code is available at \url{https://anonymous.4open.science/r/SplitLoRA-36F1}.

\section{Introduction}
Continual Learning (CL) refers to a model’s ability to sequentially learn new tasks while retaining knowledge from previously learned tasks~\cite{parisi2019continual}. This contrasts with traditional machine learning paradigms, which assume that models are trained on a fixed dataset where all data is available at once. In the CL setting, the challenge lies in maintaining performance on previous tasks while adapting to new ones, necessitating a balance between stability and plasticity. In recent years, orthogonal projection methods have demonstrated strong performance in continual learning tasks. These methods require storing the subspace spanned by the gradients of previous tasks in memory. During new task training, the gradient of the current task is projected onto the minor subspace of the previous task's gradient subspace, reducing the interference of new task updates with previously learned knowledge.

Parameter-Efficient Fine-Tuning (PEFT)~\cite{hu2021lora,houlsby2019parameter,DBLP:conf/eccv/JiaTCCBHL22} enables efficient fine-tuning for new tasks by keeping the pre-trained model parameters unchanged while introducing a small subset of trainable parameters. Due to its advantages in computational efficiency and performance, PEFT methods have gained increasing popularity in continual learning~\cite{wang2022learning,smith2023coda,gao2023unified,liang2024inflora,lu2024visual}. Combining orthogonal projection with PEFT can better leverage the knowledge of the pre-trained model, allowing for faster adaptation to new tasks. 

However, existing methods~\cite{liang2024inflora,lu2024visual} typically determine the subspace dimension for each module within the model by using a predefined threshold based on the cumulative sum of squared singular values. This approach enforces a uniform partitioning rule across all modules, ignoring the fact that different modules contribute unequally to knowledge retention~\cite{jiang2024taia, dai2021knowledge, geva2020transformer}. As a result, it fails to achieve an optimal trade-off between stability and plasticity.

In this paper, we theoretically analyze the relationship between the size of the minor gradient subspace of previous tasks and the upper bound of loss increments across all tasks. Furthermore, we model its impact on both stability and plasticity.
In practice, we build upon the LoRA framework and propose a novel method called SplitLoRA for CL tasks.
Specifically, to minimize the upper bound of the total task loss growth, we construct an optimization problem to determine the optimal size of minor subspace and derive an approximate solution to balance stability and plasticity. %Since LoRA employs a low-rank projection matrix to control parameter update directions, we design this matrix within the minor subspace, enabling efficient continual learning.  

Our contributions are summarized as follows:
\begin{itemize}
    \item We theoretically model the impact of the gradient subspace size of previous tasks on stability and plasticity in orthogonal projection based continual learning in Theorem~\ref{thm:upper_bound_forandcur} and derive an approximate optimal minor subspace in CL.
    \item We introduce SplitLoRA, a novel PEFT framework. By projecting the minor subspace onto the LoRA dimension reduction matrix \( \mathbf{A}_t \) via a random projection and optimizing only \( \mathbf{B}_t \), SplitLoRA ensures that updates remain confined to the minor subspace, thereby achieving an effective balance between stability and plasticity.
    \item  Our method achieves state-of-the-art performance across multiple datasets, surpassing existing CL methods by 2\%–5\% on different datasets.  
\end{itemize}
\section{Related Work}  
\subsection{Parameter-Efficient Fine-Tuning}  
Parameter-efficient fine-tuning modifies pre-trained models by introducing a small set of trainable parameters while keeping the original model frozen, significantly reducing computational costs while maintaining strong performance. Adapter~\cite{houlsby2019parameter} fine-tunes small modules added to multiple layers, while Prompt-tuning~\cite{DBLP:conf/emnlp/LesterAC21} and Prefix-tuning~\cite{DBLP:conf/acl/LiL20} inject trainable tokens into Transformer layers. LoRA~\cite{hu2021lora} decomposes weight updates into low-rank matrices, tuning only these structures. Despite training fewer parameters, PEFT methods often achieve comparable or superior performance~\cite{zaken2022bitfit,fu2022adapterbias,hu2021lora,DBLP:conf/nips/MahabadiHR21}. Initially developed for NLP, PEFT has been extended to vision tasks, with methods such as Visual Prompt Tuning (VPT)~\cite{DBLP:conf/eccv/JiaTCCBHL22} and AdapterFormer~\cite{chen2022adaptformer} achieving performance on par with full fine-tuning.  

\subsection{Continual Learning}  
Continual learning methods fall into three main categories: regularization-based, memory-based, and expansion-based. Regularization-based approaches~\cite{zenke2017continual,jung2020continual,aljundi2018memory,kirkpatrick2017overcoming} constrain significant changes to key parameters to mitigate catastrophic forgetting. Memory-based methods~\cite{DBLP:conf/nips/AljundiBTCCLP19,DBLP:conf/nips/AljundiLGB19,sun2022exploring,DBLP:conf/nips/LiangL23} retain prior task information in a buffer, allowing models to revisit past knowledge. Expansion-based techniques~\cite{rusu2016progressive,DBLP:conf/nips/Hung0WCCC19,li2019learn} dynamically expand the model architecture to accommodate new tasks while preserving learned representations.  

\textbf{Gradient Projection in CL.}  
Gradient projection~\cite{zeng2019continual,farajtabar2020orthogonal,saha2021gradient} mitigates task interference by constraining updates to directions orthogonal to previous tasks. Orthogonal Weight Modulation~\cite{zeng2019continual} learns a projector matrix to prevent new gradients from overwriting prior knowledge. Orthogonal Gradient Descent~\cite{farajtabar2020orthogonal} projects new gradients onto the orthogonal complement of previous task gradients. Gradient Projection Memory (GPM)~\cite{saha2021gradient} stores subspace bases of old task data and projects new gradients onto their orthogonal complement. Trust Region Gradient~\cite{lin2022trgp} enhances forward knowledge transfer by leveraging task-related representations.  

\textbf{PEFT in CL.}  
With the rise of pre-trained models~\cite{he2022masked,dosovitskiy2020image,DBLP:conf/naacl/DevlinCLT19}, continual learning has shifted toward leveraging them rather than training from scratch. While some approaches~\cite{boschini2022transfer,zheng2023preventing} fine-tune pre-trained models fully, this is often inefficient. To address this, PEFT methods have been explored in continual learning, with studies~\cite{smith2023coda,wang2022learning,khan2023introducing,yu2024personalized} integrating prompt-tuning to improve class-incremental learning. A unified framework~\cite{gao2023unified} further combines various PEFT techniques, including prompt-tuning, LoRA, and Adapter, into continual learning.

\section{Preliminary}
\subsection{Continual Learning Formulation}
In CL, there are $T$ tasks $\mathcal{T}_1,...,\mathcal{T}_T$, each task includes data: $\mathcal{D}_t=\{(\mathbf{x}_i^t,y_i^t)\}_{i=1}^{n_t}$, 
where $\mathbf{x}_i^t\in \mathbb{R}^d$ is the input and $y_i^t\in \mathbb{R}$ is the label. 
The goal of CL is to achieve an overall optimal performance across all tasks. Let $\mathbf{W}_T$ represent the model parameters after training on the last task $T$. The loss on task $t$, denoted as $\mathcal{L}_{t}(\mathbf{W}_T)$, measures the performance of $\mathbf{W}_T$ on task $t$. Let $\mathbf{W}_T^*$ represent the optimal parameters. Then, the objective is to minimize the total loss across all tasks:
\begin{align}
    \mathbf{W}_T^* = \mathop{\text{argmin}}_{\mathbf{W}_T} \mathcal{L}_{all}(\mathbf{W}_T) = \mathop{\text{argmin}}_{\mathbf{W}_T} \sum_{t=1}^{T} \mathcal{L}_t(\mathbf{W}_T).
\end{align}

\subsection{LoRA-based Continual Learning}
LoRA~\cite{hu2021lora} is a low-rank based PEFT method that reduces the number of parameters by decomposing the weight matrix into the product of two low-rank matrices. Specifically, for a linear layer, 
the extra weight matrix $\Delta \mathbf{W}$ is decomposed into two low-rank matrices $\mathbf{A}$ and $\mathbf{B}$ as :
$\Delta  \mathbf{W} =  \mathbf{A} \mathbf{B}$,
where $\mathbf{A}\in \mathbb{R}^{d_1 \times r}$ and $\mathbf{B}\in \mathbb{R}^{r\times d_2}$,
and $r$ is the dimension of the low-rank. In this way, the number of parameters of the weight matrix is reduced from $d_1d_2$ to $2dr$. 
During training, we learn $\mathbf{A}$ and $\mathbf{B}$ by minimizing the loss function of the current task. During testing, we recover $\mathbf{W} =  \mathbf{W}_0 + \mathbf{A} \mathbf{B}$ and use it for forward propagation.

CL generally initializes an additional LoRA for the new task $t$, while the LoRA of old tasks also participates in the forward process~\cite{liang2024inflora,wang2023olora}.
In the current task $t$, the forward process of the linear layer is
\begin{align}
    \mathbf{Y} =\mathbf{W}_t\mathbf{X}= (\mathbf{W}_{t-1} + \mathbf{A}_t  \mathbf{B}_t )\mathbf{X},
\end{align}
and only $\mathbf{A}_t$ and $\mathbf{B}_t$ are trained during training.

\begin{figure*}[t]
\setlength{\belowcaptionskip}{-8pt} 
\begin{center}
\centerline{\includegraphics[width=\textwidth]{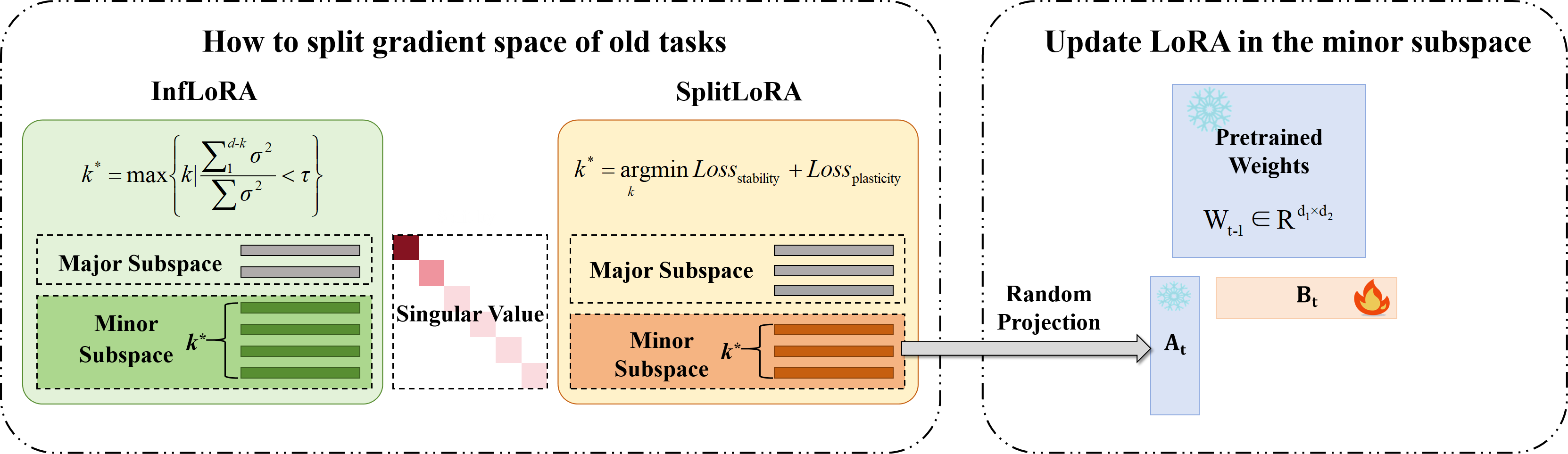}}
\caption{An overview of our proposed SplitLoRA. During the learning of the $t$-th task, the gradient space of tasks 1 to $t-1$ is decomposed into major and minor subspaces. InfLoRA determines $k^{*}$ solely based on a predefined threshold, whereas SplitLoRA balances stability loss and plasticity loss to determine $k^{*}$. Then the minor subspaces are randomly projected onto the low-dimensional matrix $A$ of LoRA and fixed, while only $B$ is trained. Specifically, 
$W_{t-1} = W_0 + \Sigma_{i=1}^{t-1} A_i B_i$,
where $W_0$ represents the pre-trained model weights, and $k$ denotes the size of the minor subspace.}
\label{pipeline}
\end{center}
\end{figure*}

\section{SplitLoRA}
In this section, we introduce SplitLoRA, a PEFT method for CL that mitigates catastrophic forgetting by partitioning the gradient space.
Unlike existing Gradient Projection-based methods~\cite{liang2024inflora,lu2024visual,jin2021gradient}, which typically define the minor subspace solely based on the sum of squared singular values being below a predefined threshold, SplitLoRA determines the optimal subspace size by analyzing the impact of the minor subspace on stability loss and plasticity loss.
We first introduce gradient projection, then model the effect of subspace partitioning on learning dynamics and formulate an optimization problem to derive the approximate optimal size of the minor subspace. Finally, we present how to construct the low-rank projection matrix within this subspace to enhance CL.
The entire process of SplitLoRA is illustrated in Figure ~\ref{pipeline}.

\subsection{Orthogonal Decomposition based Gradient Projection}
Empirically, training on new tasks often leads to performance degradation on previously learned tasks due to interference between the gradients of new and old tasks, resulting in stability loss. To address this issue, GPM~\cite{saha2021gradient} orthogonally decomposes the gradient space of previous tasks into a major subspace and a minor subspace, and constrains the update direction of the new task within the minor subspace. Our work is also built upon orthogonal decomposition.

In CL, we aim to maintain an average gradient space $\mathbf{G}^\text{old}$ for all previous tasks.
Specifically, after training the task $t-1$, we re-feed the data from this task into the model and calculate the average gradient $\mathbf{G}^\text{new}_{t-1}$ of $W$ throughout this process. Finally, we compute the average gradient space $\mathbf{G}^\text{old}_{t}$ for the previous $t-1$ tasks:
\begin{align}
\label{eq:old}
\mathbf{G}^\text{old}_{t} = \frac{1}{t-1}((t-2)\mathbf{G}^\text{old}_{t-1}+\mathbf{G}^\text{new}_{t-1}).
\end{align}
For the first task, $\mathbf{G}^\text{old}_{1}$ is equal to zero.
Next, we receive the data from task $t$. We perform a Singular Value Decomposition (SVD) on the gradient $\mathbf{G}^\text{old}_{t}$.
Larger singular values correspond to singular vectors that dominate in describing the vector's importance.
We select the last $k$ left singular vectors of $\mathbf{\hat U}_t$ as the minor subspace: % Assuming U_i was a typo and meant U_t based on the formula
\begin{align}
\mathbf{\hat U}_{t}, \mathbf{\hat \Sigma}_{t} ,\mathbf{\hat V}_{t}^\top = \text{SVD}(\mathbf{G}^\text{old}_{t}), \quad \quad  \quad  \mathbf{\hat{U}}_{t}^{k} &= \mathbf{\hat{U}}_{t}\bigl[:,\, -k:\bigr].
\end{align}
The gradient of previous tasks has a much smaller component in the minor subspace compared to the major subspace. Therefore, projecting the gradient of the new task onto the minor subspace will result in minimal interference. 

A common projection method is to construct a projection transformation matrix. For simplicity, we consider a linear layer $\mathbf{W}$ in a model.
As the model update $\Delta \mathbf{W}$ is determined by the gradient, projecting the model update onto the minor subspace is equivalent to constraining the gradient direction, which helps mitigate interference. 
Specifically, we project the model update $\Delta \mathbf{{W}}$ onto the minor subspace. The projection result is given by:
\begin{align}
\Delta \mathbf{\hat{W}}=\text{proj}_{\text{col}(\mathbf{\hat U}^k_{t})}(\Delta \mathbf{{W}})=\mathbf{\hat U}^k_t{\mathbf{\hat U}^{k\top}_{t}}\Delta \mathbf{{W}},
\end{align}
where $\mathbf{\hat U}^k_{t}$ is the projection subspace. Since the gradients of previous tasks are distributed primarily in the major subspace, therefore the projection ensures that the updates primarily benefit the new task while minimally affecting the performance of old tasks.

\subsection{Minor Space Setting for Old Tasks}
%How to Split Gradient Space of Old Tasks
\label{sec:how}
Projecting the gradient onto the minor subspace can mitigate interference with previous tasks.
And the larger the minor subspace size $k$, the larger the learning space for the new task, leading to better plasticity. However, as the gradient components of previous tasks in the minor subspace increase, stability deteriorates.

Previous methods~\cite{jin2021gradient,lin2022trgp,liang2024inflora,lu2024visual} compute the sum of the squared singular values corresponding to the minor subspace, ensuring that it remains below a predefined threshold $\tau$. Among all values of $k$ that satisfy this condition, they select the largest one:
\begin{align}
k^*=\text{max} \left \{ k|\frac{\sum_{i=1}^{d-k} \sigma_i^2}{\sum \sigma^2} < \tau \right \}.
\end{align}
The size of the minor subspace, denoted as $k$, is a crucial parameter that affects both model stability and plasticity. However, previous methods determine $k$ based on a predefined threshold $\tau$, which is merely a hyperparameter and does not effectively balance stability and plasticity. Thus, we proceed to analyze how subspace selection impacts the loss across all tasks.
Based on the smoothness of the loss function $\mathcal{L}$, we can derive an upper bound on the loss incurred due to parameter updates in CL.

\begin{proposition}[Upper Bound on Loss Increase]
\label{thm:upper_bound}
Consider a model with a linear layer updated from \( \mathbf{W}_{t-1} \) to \( \mathbf{W}_t = \mathbf{W}_{t-1} + \Delta \mathbf{W}_t \).  
Assume the loss function is $L$-smooth and that the first $t{-}1$ tasks were trained with updates constrained to be orthogonal to the gradients of previous tasks. Then, the total loss change over tasks \( 1, \dots, t \) is bounded by:
\begin{align}
\label{eq:upper}
    \sum_{i=1}^{t} \left(\mathcal{L}_{i}(\mathbf{W}_{t}) - \mathcal{L}_{i}(\mathbf{W}_{t-1})\right)
    \leq  \underbrace{-(t{-}1) \left \langle \Delta \mathbf{W}_t, \mathbf{G}^{\text{old}}_t \right \rangle}_{\text{Stability Loss}}
    \underbrace{- \left \langle \Delta \mathbf{W}_t, \mathbf{G}_t \right \rangle}_{\text{Plasticity Loss}} 
    + \frac{(t{-}1)L}{2} \| \Delta \mathbf{W}_t \|_F^2,
\end{align}
where \( \mathbf{G}_t = \nabla \mathcal{L}_t(\mathbf{W}_t) \) is the gradient for task $t$, and \( \mathbf{G}^{\text{old}}_t = \frac{1}{t{-}1} \sum_{i=1}^{t-1} \mathbf{G}_i \) is the average gradient of previous tasks. \( \langle \cdot, \cdot \rangle \) denotes the Frobenius inner product.
\end{proposition}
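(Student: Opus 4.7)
The plan is to obtain the bound by applying the descent lemma (the standard $L$-smoothness quadratic upper bound) to each of the $t$ losses separately and then summing the resulting per-task inequalities. Concretely, for every $i \in \{1,\dots,t\}$ I would write $\mathcal{L}_i(\mathbf{W}_t) - \mathcal{L}_i(\mathbf{W}_{t-1}) \leq \langle \nabla \mathcal{L}_i(\mathbf{W}_{t-1}), \Delta \mathbf{W}_t \rangle + \frac{L}{2}\|\Delta \mathbf{W}_t\|_F^2$, sum over $i$, and split the linear aggregate into the old-task block ($i < t$) and the current-task term ($i = t$).

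For the current-task term I identify $\nabla \mathcal{L}_t(\mathbf{W}_{t-1})$ with $\mathbf{G}_t = \nabla \mathcal{L}_t(\mathbf{W}_t)$, which is valid to leading order by $L$-smoothness; this yields the plasticity inner product. For each $i < t$ I invoke the standing hypothesis that every intermediate update $\Delta \mathbf{W}_j$ (with $i < j \leq t-1$) was constrained to be orthogonal to the stored gradients of the earlier tasks. This orthogonality kills the first-order drift of each old gradient along those updates, so $\nabla \mathcal{L}_i(\mathbf{W}_{t-1}) \approx \mathbf{G}_i$. Plugging in the definition $\mathbf{G}^{\text{old}}_t = \frac{1}{t-1}\sum_{i=1}^{t-1}\mathbf{G}_i$ then collapses the old-task block into $(t{-}1)\langle \mathbf{G}^{\text{old}}_t, \Delta \mathbf{W}_t \rangle$, which is exactly the stability term of the claim (up to the sign convention under which stability and plasticity are written as ``losses'' with a leading minus, i.e.\ the direction that would reduce each $\mathcal{L}_i$ is declared positive).

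The main technical hurdle is turning the first-order identification $\nabla \mathcal{L}_i(\mathbf{W}_{t-1}) \approx \mathbf{G}_i$ into a clean quantitative statement. The orthogonality constraint only annihilates the first-order drift along the intermediate updates, while smoothness leaves a residual second-order correction for each of the $t{-}1$ old tasks. My plan is to absorb these corrections into the quadratic piece of the descent lemma, which is why I expect the constant in front of $L/2$ to read $(t{-}1)$ rather than $t$: one unit of quadratic slack is effectively reallocated to account for the accumulated drift from the orthogonal-projection phases of the earlier tasks, rather than being charged separately against the current step. Once this absorption and the sign bookkeeping between $\Delta \mathbf{W}_t$ and the descent direction are handled, rearranging the summed descent-lemma inequalities delivers the claimed three-term bound directly.
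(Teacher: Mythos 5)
Your proposal reproduces the paper's argument step for step: apply the $L$-smoothness descent lemma to each $\mathcal{L}_i$, sum over $i=1,\dots,t$, split the linear contribution into old-task and current-task blocks, and invoke orthogonality of the intermediate updates (the paper's Lemma~\ref{lemma:gradient_preservation} on gradient preservation) to freeze $\nabla\mathcal{L}_i(\mathbf{W}_{t-1})=\mathbf{G}_i$ so that the old-task block collapses to $(t{-}1)\langle \mathbf{G}^{\text{old}}_t, \Delta\mathbf{W}_t\rangle$. The two loose ends you flag---the jump from $\tfrac{tL}{2}\|\Delta\mathbf{W}_t\|_F^2$ to $\tfrac{(t{-}1)L}{2}\|\Delta\mathbf{W}_t\|_F^2$ and the leading minus signs on the stability and plasticity terms---are not resolved by your absorption heuristic, but they are left equally unresolved in the paper's own derivation, which obtains $\tfrac{tL}{2}$ after summing and then silently writes $\tfrac{(t{-}1)L}{2}$ with flipped signs in the final display; you have therefore followed the paper's route rather than departed from it.
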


This result shows that parameter updates affect both the current and past tasks. The term \( \langle \Delta \mathbf{W}_t, \mathbf{G}^{\text{old}}_t \rangle \) captures interference with past tasks (stability loss), while \( \langle \Delta \mathbf{W}_t, \mathbf{G}_t \rangle \) reflects progress on the current task (plasticity gain). The squared norm \( \| \Delta \mathbf{W}_t \|_F^2 \) acts as a regularization term controlled by the smoothness constant $L$.

Next, we discuss how to choose the minor subspace in gradient projection to minimize the combined stability and plasticity losses.
Building on Proposition 1, we can theoretically analyze how stability loss and plasticity loss vary as a function of $k$.
From Eq.~\eqref{eq:upper}, after replacing $\Delta \mathbf{W}_{t}$ with $\Delta \mathbf{\hat W}_{t}$, where $\Delta \mathbf{\hat W}_{t} = \mathbf{U}_t^k \mathbf{U}_t^{k\top} \Delta \mathbf{W}_{t}$ is the projected update onto the minor subspace,
we can express the stability loss $\mathcal{L}_{t}^{S}(\mathbf{W}_{t})$ 
and the plasticity loss $\mathcal{L}_{t}^{P}(\mathbf{W}_{t})$ as follows:
\begin{align}
\mathcal{L}_{t}^{S}(\mathbf{W}_{t}) &= - (t-1) \left <\Delta \mathbf{\hat W}_{t}, \mathbf{G}^{\text{old}}_{t} \right >, \\
\mathcal{L}^{P}_{t}(\mathbf{W}_{t}) &= - \left <\Delta \mathbf{\hat W}_{t}, \mathbf{G}_{t} \right >.
\end{align}
The stability  loss is proportional to the alignment between the projected update $\Delta \mathbf{\hat W}_{t}$ 
and the gradient of old tasks $\mathbf{G}^\text{old}_{t}$, 
while the plasticity loss depends on the alignment of $\Delta \mathbf{\hat W}_{t}$ 
with the gradient of the new task $\mathbf{G}_{t}$.
Then  we define the error function $\epsilon(k)$, which quantifies the proportion of these minor directions and is given by:
\begin{align}
\epsilon(k) = \frac{\sum_{i=d-k+1}^d \sigma_i}{\sum_{i=1}^d \sigma_i}.
\end{align}
$\epsilon(k)$ measures the interference error caused by updating the model within the minor subspace on old tasks.
Based on Proposition ~\ref{thm:upper_bound}, we can derive the following theorem:
\begin{theorem}
\label{thm:upper_bound_forandcur}
Let \( \mathbf{W}_{t-1} \) denote the weight matrix of a linear layer in the model, updated as  $\mathbf{W}_t = \mathbf{W}_{t-1} + \Delta \mathbf{\hat W}_{t}= \mathbf{W}_{t-1} +  \mathbf{U^k}_t \mathbf{U^k}_t^{\top} \Delta \mathbf{W}_{t}$ . Since the update direction of the new task is unknown, we assume that it is uniformly distributed across all directions. that is to say, 
 \( \Delta \mathbf{W}_t \) has the same expected projection value across different feature directions of \( \mathbf{G}_t \), we provide the expected values of the stability loss :
\begin{align}
\mathbb{E}[\mathcal{L}_{t}^{S}(\mathbf{W}_{t})] = -(t-1) \epsilon_t(k_t) \left <\Delta \mathbf{W}_{t}, \mathbf{G}^\text{old}_{t} \right >,
\end{align}
and the plasticity loss:
\begin{align}
\mathbb{E}[\mathcal{L}^{P}_{t}(\mathbf{W}_{t})] = -\frac{k_t}{d} \left <\Delta \mathbf{W}_{t}, \mathbf{G}_{t} \right >.
\end{align}
\end{theorem}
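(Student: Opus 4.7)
The plan is to expand both inner products in the orthonormal basis formed by the left singular vectors $\mathbf{u}_1,\dots,\mathbf{u}_d$ of $\mathbf{G}^{\text{old}}_t$, since the projector $\mathbf{U}^k_t\mathbf{U}^{k\top}_t$ retains exactly the last $k$ of those basis directions. Each Frobenius inner product then decomposes into $d$ per-direction contributions, and the uniformity hypothesis forces those contributions to be identical in expectation, so passing from the full sum of $d$ indices to the truncated sum of $k$ indices rescales the expectation by a transparent factor.

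For the stability term, I would write $\mathbf{G}^{\text{old}}_t=\sum_{i=1}^{d}\sigma_i\mathbf{u}_i\mathbf{v}_i^{\top}$ and use the cyclic property of the trace together with $\mathbf{u}_i^{\top}\mathbf{u}_j=\delta_{ij}$ to obtain
\begin{align*}
\langle \mathbf{U}^k_t\mathbf{U}^{k\top}_t\Delta\mathbf{W}_t,\,\mathbf{G}^{\text{old}}_t\rangle \;=\; \sum_{i=d-k+1}^{d}\sigma_i\,\bigl(\mathbf{u}_i^{\top}\Delta\mathbf{W}_t\mathbf{v}_i\bigr),
\end{align*}
and analogously $\langle\Delta\mathbf{W}_t,\mathbf{G}^{\text{old}}_t\rangle=\sum_{i=1}^{d}\sigma_i(\mathbf{u}_i^{\top}\Delta\mathbf{W}_t\mathbf{v}_i)$. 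The uniformity assumption renders $\mathbb{E}[\mathbf{u}_i^{\top}\Delta\mathbf{W}_t\mathbf{v}_i]$ a single $i$-independent constant, so taking expectations and comparing the two sums yields the ratio $\sum_{i=d-k+1}^{d}\sigma_i/\sum_{i=1}^{d}\sigma_i=\epsilon_t(k_t)$. Multiplying by $-(t-1)$ delivers the stated formula for $\mathbb{E}[\mathcal{L}^{S}_t(\mathbf{W}_t)]$.

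For the plasticity term I would apply the same basis expansion but now against $\mathbf{G}_t$, which has no preferred alignment with $\{\mathbf{u}_i\}$. Writing $\mathbf{U}^k_t\mathbf{U}^{k\top}_t\Delta\mathbf{W}_t=\sum_{i=d-k+1}^{d}\mathbf{u}_i(\mathbf{u}_i^{\top}\Delta\mathbf{W}_t)$ and using the trace form of the Frobenius inner product gives
\begin{align*}
\langle\mathbf{U}^k_t\mathbf{U}^{k\top}_t\Delta\mathbf{W}_t,\,\mathbf{G}_t\rangle \;=\; \sum_{i=d-k+1}^{d}\mathbf{u}_i^{\top}\mathbf{G}_t\Delta\mathbf{W}_t^{\top}\mathbf{u}_i,
\end{align*}
while $\langle\Delta\mathbf{W}_t,\mathbf{G}_t\rangle$ is the same sum taken over all $d$ indices. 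Since the uniformity hypothesis makes the diagonal terms $\mathbf{u}_i^{\top}\mathbf{G}_t\Delta\mathbf{W}_t^{\top}\mathbf{u}_i$ equal in expectation across $i$, keeping $k$ of the $d$ contributions scales the expectation by the ratio $k/d$, giving the claimed identity for $\mathbb{E}[\mathcal{L}^{P}_t(\mathbf{W}_t)]$.

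The main obstacle is formalizing the uniformity assumption so that both parts follow from a single probabilistic premise. The paper states it only in words, yet the stability step needs $i$-independent expectations of the bilinear forms $\mathbf{u}_i^{\top}\Delta\mathbf{W}_t\mathbf{v}_i$ (tied to the singular basis of $\mathbf{G}^{\text{old}}_t$), whereas the plasticity step needs $i$-independent expectations of the quadratic forms $\mathbf{u}_i^{\top}\mathbf{G}_t\Delta\mathbf{W}_t^{\top}\mathbf{u}_i$. I would therefore consolidate these into a single orthogonal-invariance assumption on $\Delta\mathbf{W}_t$, from which both directional equalities follow; once that assumption is stated precisely, the remaining work is purely algebraic and amounts to recognizing $\epsilon_t(k_t)$ and $k_t/d$ as the natural $\sigma$-weighted and unweighted ratios of retained directions to total directions.
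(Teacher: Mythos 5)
Your proof follows essentially the same route as the paper's: expand both Frobenius inner products in the left-singular basis of \(\mathbf{G}^{\text{old}}_t\), observe that the projector \(\mathbf{U}^k_t\mathbf{U}^{k\top}_t\) retains exactly the last \(k\) directions, and invoke the uniformity hypothesis to make the per-direction expected contributions \(i\)-independent, which delivers the ratios \(\epsilon_t(k_t)\) (singular-value weighted) and \(k_t/d\) (unweighted). Your closing observation is fair and in fact slightly sharper than the paper itself: the appendix applies the one informal uniformity premise to two different objects (the bilinear forms \(\mathbf{u}_i^{\top}\Delta\mathbf{W}_t\mathbf{v}_i\) for stability versus the quadratic forms \(\mathbf{u}_i^{\top}\mathbf{G}_t\Delta\mathbf{W}_t^{\top}\mathbf{u}_i\) for plasticity, the latter written loosely as \(\langle\Delta\mathbf{W}_t,\mathbf{u}_i\rangle\langle\mathbf{G}_t,\mathbf{u}_i\rangle\)), so stating a single orthogonal-invariance assumption on \(\Delta\mathbf{W}_t\) would tighten the argument without changing its substance.
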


\noindent
The proof of this theorem can be found in Appendix~\ref{appendix:thm:upper_bound_forandcur}. 
Therefore, achieving an optimal balance between these two objectives 
requires solving the following optimization problem:
\begin{align}
k_t^* = \mathop{\text{argmin}}_{k} \left( \mathbb{E}[\mathcal{L}_{t}^{F}(\mathbf{W}_{t})] + \mathbb{E}[\mathcal{L}_{t}^{P}(\mathbf{W}_{t})] \right).
\end{align}
Noting that \(\Delta \mathbf{W}_{t}\) and \(\mathbf{G}_{t}\) gradually change as training progresses, solving this optimization problem is challenging. To simplify this, we introduce a ratio parameter \(\alpha\):
\begin{align}
\alpha = - \frac{\langle \Delta \mathbf{W}_{t}, \mathbf{G}_{t} \rangle}{\langle \Delta \mathbf{W}_{t}, \mathbf{G}^\text{old}_t \rangle}.
\end{align}
This substitution reformulates the optimization problem into the following form:
\begin{align}
\label{eq:k_sim}
k^*_t = \mathop{\text{argmin}}_{k} \left( (t-1) \epsilon_t(k_t) - \alpha \frac{k_t}{d} \right).
\end{align}
Since \(\Delta \mathbf{W}_t\) benefits new tasks, it often interferes with previous task knowledge, leading to:
\begin{align}
\langle \Delta \mathbf{W}_{t}, \mathbf{G}_{t} \rangle > 0, \quad \langle \Delta \mathbf{W}_{t}, \mathbf{\hat{G}}_{t} \rangle < 0.
\end{align}
From Eq.~\eqref{eq:k_sim}, it is evident that increasing \(k_t\) leads to higher \(\epsilon_t(k_t)\), which increases stability loss \(\mathcal{L}_t^F\), while expanding the learning space and thus reducing plasticity loss \(\mathcal{L}_t^P\).
However, since both \(\Delta \mathbf{W}_t\) and \(\mathbf{G}_t\) change during training, \(\alpha\) also varies dynamically. Meanwhile, the update subspace must be determined before training begins for task \(t\). To resolve this mismatch, we treat \(\alpha\) as a fixed hyperparameter throughout the learning process. In our experiments, $\alpha$ was set as a hyperparameter, with $\alpha=20$  as a general choice. Further experimental analysis indicates that our method is highly robust to $\alpha$ in Table~\ref{table:alpha}.

In the simplified optimization problem of Eq.\eqref{eq:k_sim}, the parameter \(k_t\) is restricted to integer values within the range \([1, d]\). The optimal solution to this equation can be obtained by evaluating the objective function for all possible values of \(k_t\) and selecting the one that minimizes it
as $k_t^*$. For example, in ViT-B/16~\citep{dosovitskiy2020image}, embedding
dimension is 768 , and $k$ can be selected as any integer between 1 and 768. It is worth noting that we compute $k$ separately for each LoRA module, as weights at different layers and positions capture substantially different knowledge. A fixed threshold, as used in InfLoRA~\cite{liang2024inflora}, cannot effectively account for such variation across modules.

\begin{algorithm}[tb]
\caption{SplitLoRA}
\label{alg:GAPLoRA}
\begin{algorithmic}[1]  % [1] 表示自动加行号
\State \textbf{Input:} Datasets $\mathcal{D}_t=\{(\mathbf{x}_i^t, y_i^t)\}_{i=1}^{n_t}$, for $T$ tasks $\mathcal{T}_1,...,\mathcal{T}_T$, a pre-trained ViT model $f_{\Theta}(\cdot)$ with $l$ layers.
\State \textbf{Output:} The optimized $\mathbf{W}^l_T$ for each layer $l$.
\State \textbf{Initialization:} $\mathbf{G}^{\text{old}}_1 = \mathbf{0}$
\For{$t = 1$ to $T$}
  \If{$t > 1$}
    \State Compute $k_t$ using Eq.~\eqref{eq:k_sim}  for each LoRA module
    \State Initialize $\mathbf{A}_t$ using Eq.~\eqref{eq:A_proj} for each LoRA module
  \EndIf
  \State Train LoRA on task $\mathcal{T}_t$ using dataset $\mathcal{D}_t$
  \State Update $\mathbf{G}^{\text{old}}_t$ using Eq.~\eqref{eq:old} for each layer
\EndFor
\end{algorithmic}
\end{algorithm}

\subsection{LoRA Updates in the Minor Subspace}
\label{sec:minor-subspace-lora}
To ensure LoRA updates remain within the minor subspace, we fix the projection matrix \( \mathbf{A}_t \) and only optimize \( \mathbf{B}_t \). LoRA parameterizes the weight update as:
\( \Delta \mathbf{W}_t = \mathbf{A}_t \mathbf{B}_t \). When \( \mathbf{A}_t \) is fixed, the update is confined to its column space~\cite{liang2024inflora,wang2023olora}. To restrict this space to the minor subspace of previous tasks, we construct 

\begin{align}
\label{eq:A_proj}
\mathbf{A}_t = \hat{\mathbf{U}}_t^k \mathbf{R}, \end{align}

where \( \hat{\mathbf{U}}_t^k \in \mathbb{R}^{d \times k} \) is an orthonormal basis of the minor subspace and \( \mathbf{R} \in \mathbb{R}^{k \times r} \) is a random Gaussian matrix.
Importantly, this constraint only holds if \( \mathbf{A}_t \) remains fixed during training, otherwise, the update direction may drift out of the subspace. Fortunately, prior works~\cite{zhang2023lorafa,liang2024inflora} verify that fixing \( \mathbf{A}_t \) maintains sufficient model capacity while controlling interference.
This design ensures that task updates are constrained to low-interference directions $\hat{\mathbf{U}}_t^k$, balancing stability and plasticity without additional memory or computational cost. The full procedure of SplitLoRA is summarized in Algorithm~\ref{alg:GAPLoRA}.

\section{Experiment}

\begin{table*}[t]
\setlength{\belowcaptionskip}{-8pt} 
    \caption{We present FAA (\%) and CAA(\%) on ImageNet-R under three incremental learning settings: “5-task,” “10-task,” and “20-task.” All backbone networks are pre-trained on ImageNet-21K.}
    \label{table:results_imagenet}
    \begin{center}
        \resizebox{\textwidth}{!}{
        \setlength\tabcolsep{3.2pt}
        \renewcommand\arraystretch{1.2}
        \begin{tabular}{ l l   c c  c c  c c}
        \toprule
        \multirow{2}{*}{Method} &\multirow{2}{*}{Pub.} 
        &\multicolumn{2}{c}{5-task} &\multicolumn{2}{c}{10-task}  &\multicolumn{2}{c}{20-task} \\
        \cline{3-8}
        &&FAA ($\uparrow$) &CAA ($\uparrow$) &FAA ($\uparrow$) &CAA ($\uparrow$) &FAA ($\uparrow$) &CAA ($\uparrow$) \\
        \hline
        Upper-bound  & --&84.09 $\pm$ 0.21 &-- &84.09 $\pm$ 0.21 &-- &84.09 $\pm$ 0.21 & --\\
        FT   & --&18.74 $\pm$ 0.44 &48.39 $\pm$ 0.58 
                         &10.12 $\pm$ 0.51 &35.23 $\pm$ 0.92 
                         &4.75 $\pm$ 0.40 &22.8 $\pm$ 0.37 \\
        FT++   & --&60.42 $\pm$ 0.87 &71.59 $\pm$ 0.50
                            &48.93 $\pm$ 1.15 &66.79 $\pm$ 0.92 
                            &35.98 $\pm$ 1.38 &59.68 $\pm$ 0.95 \\
        L2P++~\citep{wang2022learning}   &CVPR22 &70.83 $\pm$ 0.58 &78.34 $\pm$ 0.47
                                  &69.29 $\pm$ 0.73 &78.30 $\pm$ 0.69 
                                  &65.89 $\pm$ 1.30 &77.15 $\pm$ 0.65  \\
        Deep L2P++~\citep{wang2022learning}   &CVPR22   &73.93 $\pm$ 0.37 &80.14 $\pm$ 0.54
                                        &71.66 $\pm$ 0.64 &79.63 $\pm$ 0.90 
                                        &68.42 $\pm$ 1.20 &78.68 $\pm$ 1.03 \\
        DualPrompt~\citep{wang2022dualprompt}   &ECCV22   &73.05 $\pm$ 0.50 &79.47 $\pm$ 0.40
                                        &71.32 $\pm$ 0.62 &78.94 $\pm$ 0.72 &67.87 $\pm$ 1.39 &77.42 $\pm$ 0.80 \\
        CODA-P~\citep{smith2023coda}   &CVPR23   &76.51 $\pm$ 0.38 &82.04 $\pm$ 0.54
                                    &75.45 $\pm$ 0.56 &81.59 $\pm$ 0.82 
                                    &72.37 $\pm$ 1.19 &79.88 $\pm$ 1.06 \\
        HiDe-Prompt~\citep{wang2023hierarchical} &NeurIPS23 &76.29 $\pm$ 0.10 &78.77 $\pm$ 0.11 &76.74 $\pm$ 0.18 &78.76 $\pm$ 0.11 &76.46 $\pm$ 0.06 &78.76 $\pm$ 0.11 \\
        EvoPrompt~\citep{kurniawan2024evolving} &AAAI24   &77.16 $\pm$ 0.18 &82.22 $\pm$ 0.54
                               &76.83 $\pm$ 0.08 &82.09 $\pm$ 0.68
                               &74.41 $\pm$ 0.23 &80.96 $\pm$ 1.42 \\
   InfLoRA~\citep{liang2024inflora} &CVPR24   &79.82 $\pm$ 0.27 &84.07 $\pm$ 0.48
    &78.10 $\pm$ 0.43 &83.47 $\pm$ 1.23
    &73.81 $\pm$ 0.47 &81.02 $\pm$ 0.56 \\
        VQ-Prompt~\citep{jiao2024vector}   &NeurIPS24  &79.23 $\pm$ 0.29  &82.96 $\pm$ 0.50
        &78.71 $\pm$ 0.22 & 83.24 $\pm$ 0.68 
        & 78.10 $\pm$ 0.22  & 82.70 $\pm$ 1.16 \\   
        VPT-NSP$^{2}$~\citep{lu2024visual}  &NeurIPS24  
        &79.71 $\pm$ 0.22 & 84.54 $\pm$ 0.68 &79.35 $\pm$ 0.19  &84.92 $\pm$ 0.41
        & 76.72 $\pm$ 0.44  & 82.91 $\pm$ 0.60 \\

        S-LoRA ~\citep{2025slora}&ICLR25                   &79.15 $\pm$ 0.20 &83.01 $\pm$ 0.42
                &77.34 $\pm$ 0.35 &82.04 $\pm$ 0.24
                &75.26 $\pm$ 0.37 &80.22 $\pm$ 0.72 \\
        \textbf{SplitLoRA}  &\textbf{This work}  &\textbf{81.92 $\pm$ 0.29}  & \textbf{85.83 $\pm$ 0.55} &\textbf{81.00 $\pm$ 0.17} &\textbf{85.84 $\pm$ 0.62}  &\textbf{78.82 $\pm$ 0.28}  & \textbf{84.57 $\pm$ 0.44} \\

% 76.95(0.23) 81.81(0.14) 74.75(0.64) 80.67(0.55) 69.89(0.56) 76.68(0.57)
        \bottomrule
        \end{tabular}
    }
    \end{center}
\end{table*}
\begin{figure*}[t] % 控制浮动位置为当前位置或顶部
\setlength{\belowcaptionskip}{-10pt} 
    \centering % 图片居中
    \includegraphics[width=1\textwidth]{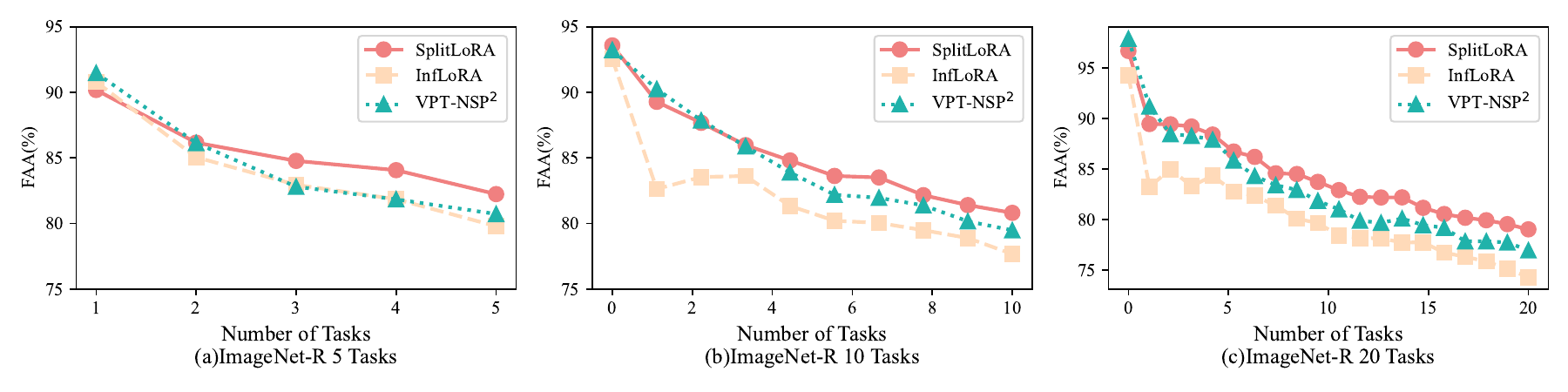} 
    % \vspace{-8mm} 
    \caption{Variation of the performance of different methods during the learning of ImageNet-R.} % 图片标题
    \label{fig:imagenet} % 标签，用于引用
\end{figure*}

\subsection{Experimental Settings}

\textbf{Datasets.} 
We conducted experiments on three standard datasets: ImageNet-R~\cite{hendrycks2021many}, CIFAR-100~\cite{krizhevsky2009learning},  and DomainNet~\cite{peng2019moment}.
ImageNet-R is a variant of ImageNet with 200 classes. CIFAR-100 consists of 100 classes, each containing 600 images. DomainNet contains images from diverse domains, posing a challenge for cross-domain generalization.
Following ~\cite{gao2023unified,wang2022learning,liang2024inflora}, we divided ImageNet-R into 5, 10, and 20 tasks, with each task comprising 40, 20, and 10 classes, respectively. CIFAR-100 was split into 10 tasks, each containing 10 classes, while DomainNet was uniformly partitioned into 5 tasks.
\begin{table*}[t]
\centering
\caption{We present FAA (\%) and CAA(\%) on CIFAR100: 10 tasks and DomainNet: 5 tasks. We report results over 3 trials. All backbone networks are pre-trained on ImageNet-21K.}
    \label{table:results_cifar}
            \resizebox{0.9\linewidth}{!}{
\begin{tabular}{llcc|cc}
\toprule
        \multirow{2}{*}{Method} &\multirow{2}{*}{Pub.}  & \multicolumn{2}{c|}{CIFAR100} & \multicolumn{2}{c}{DomainNet} \\ \cline{3-6} 
 &&FAA ($\uparrow$) &CAA ($\uparrow$) &FAA ($\uparrow$) &CAA ($\uparrow$) \\ \midrule
Upper-bound &--& 91.92 $\pm$ 0.05 & -- & 90.12 $\pm$ 0.13 & -- \\
DualPrompt \cite{wang2022dualprompt} &ECCV22& 84.42 $\pm$ 0.30 & 90.06 $\pm$ 0.07 & 72.14 $\pm$ 0.05 & 77.71 $\pm$ 0.06 \\
CODA-Prompt \cite{smith2023coda} &CVPR23& 86.62 $\pm$ 0.11 & 91.08 $\pm$ 0.28 & 73.23 $\pm$ 0.13 & 78.72 $\pm$ 0.07 \\
LAE \cite{gao2023unified} &ICCV23& 84.15 $\pm$ 0.16 & 89.84 $\pm$ 0.03 & 66.85 $\pm$ 0.40 & 75.01 $\pm$ 0.17 \\
C-LoRA \cite{smith2024continual} &TMLR24& 82.97 $\pm$ 0.47 & 88.81 $\pm$ 0.34 & 69.34 $\pm$ 0.16 & 75.25 $\pm$ 0.11 \\
InfLoRA \cite{liang2024inflora} &CVPR24& 87.06 $\pm$ 0.25 & 91.59 $\pm$ 1.43 & 78.26 $\pm$ 0.50 & 78.82 $\pm$ 0.34 \\
VPT-NSP$^{2}$~\citep{lu2024visual} &NeurIPS24& 88.04 $\pm$ 0.11 & 92.25 $\pm$ 0.80 & 83.83 $\pm$ 0.19 & 88.63 $\pm$ 0.10 \\
\textbf{SplitLoRA} &\textbf{This work}& \textbf{90.33 $\pm$ 0.73} & \textbf{93.70 $\pm$ 0.32} & \textbf{84.31 $\pm$ 0.23} & \textbf{88.99 $\pm$ 0.57} \\
\bottomrule
\end{tabular}
}
\end{table*}

\begin{table*}[t]
\centering
\caption{We present FAA (\%) and CAA(\%) on ImageNet-R: 10-tasks. Backbones are with different self-supervised pre-training paradigms: iBOT-1K and DINO-1K.}
    \label{table:results_pre}
                \resizebox{0.9\linewidth}{!}{
\begin{tabular}{llcc|cc}
\toprule
\multirow{2}{*}{Method} &\multirow{2}{*}{Pub.}  & \multicolumn{2}{c|}{iBOT-1K} & \multicolumn{2}{c}{DINO-1K} \\ \cline{3-6} 
 & & FAA $\uparrow$ & CAA $\uparrow$ & FAA $\uparrow$ & CAA $\uparrow$ \\ \midrule
Upper-bound & -- & 84.09 $\pm$ 0.21 & -- & 81.98 $\pm$ 0.07 & -- \\
DualPrompt \cite{wang2022dualprompt}& ECCV22 & 61.51 $\pm$ 1.05 & 67.11 $\pm$ 0.08 & 58.57 $\pm$ 0.45 & 64.89 $\pm$ 0.15 \\
CODA-Prompt \cite{smith2023coda}& CVPR23 & 66.56 $\pm$ 0.68 & 73.14 $\pm$ 0.57 & 63.15 $\pm$ 0.39 & 69.73 $\pm$ 0.25 \\
HiDe-Prompt \cite{wang2023hierarchical}& NeurIPS23 & 71.33 $\pm$ 0.21 & 73.62 $\pm$ 0.13 & 68.11 $\pm$ 0.18 & 71.70 $\pm$ 0.01 \\
InfLoRA \cite{liang2024inflora}& CVPR24 & 71.84 $\pm$ 0.09 & 78.29 $\pm$ 0.09 & 68.31 $\pm$ 0.28 & 76.15 $\pm$ 0.05 \\
VPT-NSP$^{2}$ \cite{lu2024visual} &NeurIPS24 & 73.85 $\pm$ 0.23 & 80.34 $\pm$ 0.60 & 69.45 $\pm$ 0.74 & 76.38 $\pm$ 0.50 \\ 
VQ-Prompt  \cite{jiao2024vector}&NeurIPS24 & 71.68 $\pm$ 0.72 & 76.66 $\pm$ 0.40 & 68.42 $\pm$ 0.28 & 74.43 $\pm$ 0.58 \\ 
\textbf{SplitLoRA} & \textbf{This work} & \textbf{74.58 $\pm$ 1.05} & \textbf{81.45 $\pm$ 1.72} & \textbf{70.49 $\pm$ 0.31} & \textbf{78.15 $\pm$ 1.13}\\
\bottomrule
\end{tabular}
}
\end{table*}

\textbf{Baselines and Evaluation Metrics.}  
We compare our method with several state-of-the-art continual learning approaches, including L2P++~\citep{wang2022learning}, Deep L2P++~\citep{wang2022learning}, DualPrompt~\citep{wang2022dualprompt}, CODA-P~\citep{smith2023coda}, HiDe-Prompt~\citep{wang2023hierarchical}, EvoPrompt~\citep{kurniawan2024evolving}, VQ-Prompt~\citep{jiao2024vector}, VPT-NSP$^{2}$, and InfLoRA~\citep{liang2024inflora}.  
The “Upper bound” represents the performance achieved by jointly training on all classes in one go. Results are averaged over three runs with different random seeds. 
Following ~\cite{jiao2024vector}, we report Final Average Accuracy (FAA) and Cumulative Average Accuracy (CAA). For  details, please refer to the appendix ~\ref{sec:eval_metrics}.

\noindent\textbf{Implementation Details.} 
We follow prior works~\citep{wang2022learning,wang2022dualprompt,smith2023coda,wang2023hierarchical,zhang2023slca,kurniawan2024evolving} and adopt ViT-Base~\citep{dosovitskiy2020image} pre-trained  on ImageNet-21K~\citep{ridnik2021imagenet} as the backbone. The LoRA rank is set to $10$, and the embedding dimension is $D\!=\!768$, matching the feature dimension of ViT-Base\cite{dosovitskiy2020image}. Following~\cite{liang2024inflora}, we insert SplitLoRA modules into the key and value projections in multi-head attention. Our method is optimized using AdamW~\citep{loshchilov2018decoupled} with an initial learning rate of $1\mathrm{e}{-3}$ for LoRA and $1\mathrm{e}{-2}$ for the classification head. We use a batch size of $256$ across all datasets, and each task is trained for 10 epochs. All experiments are conducted on a single NVIDIA GeForce L40S GPU. All results are reported as mean ± standard deviation over three random seeds.

\begin{table}[t]
\setlength{\belowcaptionskip}{-3pt} 
    \caption{Evaluation of model performance under different values of $\alpha$ on ImageNet-R. A higher $\alpha$ may improve plasticity but could impact stability.}
    \label{table:alpha}
    \begin{center}
        \resizebox{0.7\linewidth}{!}{
        \setlength\tabcolsep{3.2pt}
        \renewcommand\arraystretch{1.2}
        \begin{tabular}{ l    c c  c c  c c}
        \toprule
        \multirow{2}{*}{Method} 
        &\multicolumn{2}{c}{5-task} &\multicolumn{2}{c}{10-task}  &\multicolumn{2}{c}{20-task} \\
        \cline{2-7}
        &FAA ($\uparrow$) &CAA ($\uparrow$) &FAA ($\uparrow$) &CAA ($\uparrow$) &FAA ($\uparrow$) &CAA ($\uparrow$) \\
        \midrule
    InfLoRA   &79.82  &84.07
    &78.10  &83.47
    &73.81  &81.02\\\hline
        SplitLoRA($\alpha=30$)    &{82.15}  & {85.60}
        &{81.03} &{85.56 } 
        &{78.73}  &{84.06} \\
        SplitLoRA($\alpha=20$)   &{81.92}  & {85.83}
        &{81.00} &{85.84 } 
        &{78.82}  & {84.57} \\
        SplitLoRA($\alpha=10$)   &{82.35}  & {85.82}
        &{81.03} &{85.67 } 
        &{77.89}  & {83.27} \\        
        SplitLoRA($\alpha=5$)  &{82.52}  &{85.89}
        &{81.38} &85.89 
        &{78.15}  & {84.19} \\  
        SplitLoRA($\alpha=1$)   &{82.40}  & {85.86}
&{80.89}  &{85.22}
        &{78.59}  & {84.20} \\  
        \bottomrule
        \end{tabular}}
    \end{center}
    
\end{table}

\begin{figure*}[ht] % 控制浮动位置为当前位置或顶部
\setlength{\belowcaptionskip}{-8pt} 
    \centering % 图片居中
    \includegraphics[width=1\textwidth]{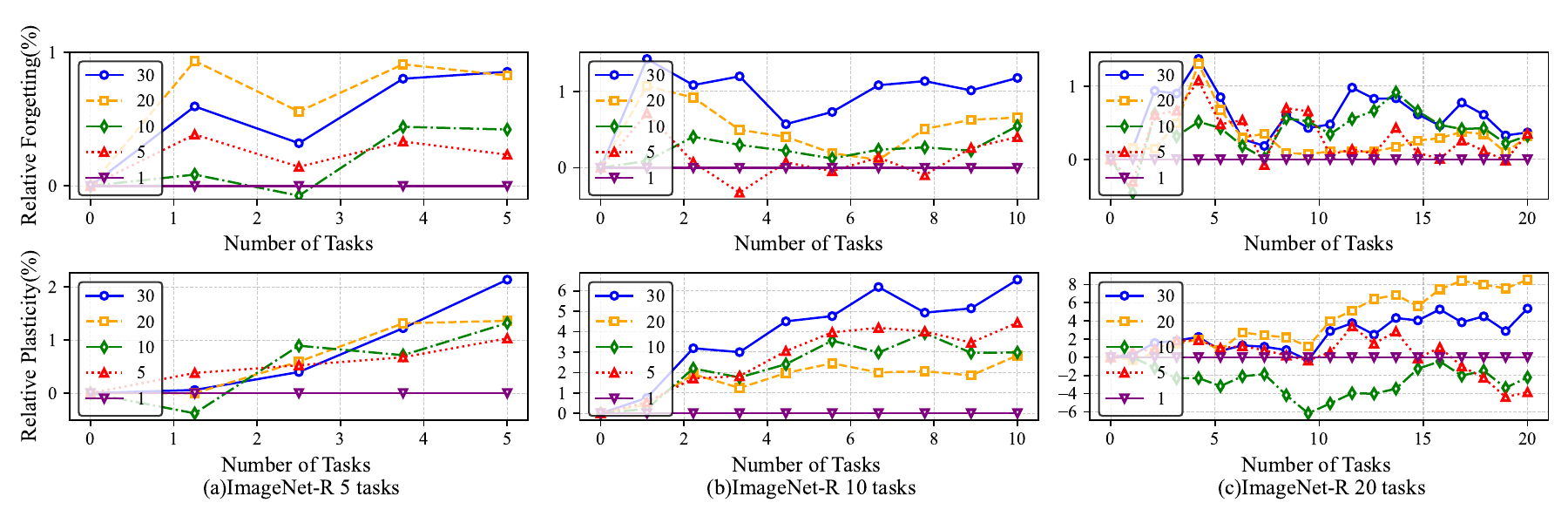} 
    \vspace{-15pt}
    \caption{The impact of $\alpha$ on the stability and plasticity of the model in continual learning. As $\alpha$ increases, stability decreases (higher forgetting) while plasticity improves, illustrating the trade-off between retaining past knowledge and adapting to new tasks.} % 图片标题
    \label{fig:alpha} % 标签，用于引用
\end{figure*}

\subsection{Experimental Results}
\textbf{Results on ImageNet-R, CIFAR100 and Domainet.} 
Table~\ref{table:results_imagenet} presents the results of different methods evaluated on ImageNet-R with varying numbers of tasks. 
It highlights how our proposed method, SplitLoRA, achieves consistently higher accuracy compared to existing continual learning methods across different task setups. 
Additionally, Table~\ref{table:results_cifar} shows the results of these methods on CIFAR100 and DomainNet datasets. 
Across both tables, SplitLoRA outperforms other methods in FAA and CAA. Figure~\ref{fig:imagenet} shows the accuracy trends of various CL methods on ImageNet-R. Our method achieves the highest accuracy at the end and outperforms others throughout the learning curve. 

\textbf{Variant Pre-trained Models.}  
Table~\ref{table:results_pre} provides a summary of experimental results on the 10-task ImageNet-R dataset using different self-supervised pre-training paradigms. 
Specifically, we evaluate our method with iBOT-1K~\citep{zhou2022image} and DINO-1K~\citep{caron2021emerging} pre-training frameworks. 
These results clearly demonstrate that SplitLoRA consistently outperforms state-of-the-art continual learning methods, irrespective of the pre-training paradigm used. 
This robustness underscores the generalizability and effectiveness of SplitLoRA in leveraging self-supervised pre-training for continual learning tasks.

\begin{table}[H]
\centering
\begin{minipage}{0.46\linewidth}
\centering
\caption{Impact of different $\mathbf{A}_t$ initialization strategies on ImageNet-R.}
\label{tab:imagenet-r-splits}
\resizebox{\linewidth}{!}{%
\begin{tabular}{lccc}
\toprule
Init of $\mathbf{A}_t$ & 5 tasks & 10 tasks & 20 tasks \\
\midrule
Random & 76.57 & 76.13 & 72.30 \\
InfLoRA & 78.92 & 78.10 & 73.81 \\
SplitLoRA & \textbf{81.92} & \textbf{81.00} & \textbf{78.82} \\
\bottomrule
\end{tabular}}
\end{minipage}
\hfill
\begin{minipage}{0.51\linewidth}
\centering
\caption{Efficiency of LoRA variants on ImageNet-R (10 tasks).}
\label{tab:efficiency}
\resizebox{\linewidth}{!}{%
\begin{tabular}{lccc}
\toprule
Method & Extra Fwd & Mem & Time \\
\midrule
LoRA & None & 22.80 GB & 1h 37m \\
InfLoRA & 2/task & 23.06 GB & 1h 48m \\
SplitLoRA & 1/task & 23.03 GB & 1h 43m \\
\bottomrule
\end{tabular}}
\end{minipage}
\end{table}

\textbf{Initialization strategies of $\mathbf{A}_t$.} Table~\ref{tab:imagenet-r-splits} compares different initialization strategies for $\mathbf{A}_t$. SplitLoRA achieves consistently better performance across task splits, demonstrating the effectiveness of using projected minor subspace over random or InfLoRA.

\textbf{Memory and Time Cost.} Table~\ref{tab:efficiency} shows that SplitLoRA achieves a favorable trade-off between performance and efficiency. It introduces only 1 extra forward pass per task while maintaining similar memory and runtime overheads compared to InfLoRA.

\subsection{Hyperparameter Analysis and Discussion}\label{sec:ablation}
We study the effect of the hyperparameter \(\alpha\) on continual learning performance. As shown in Table~\ref{table:alpha}, changing \(\alpha\) has limited impact on final accuracy, and all settings consistently outperform InfLoRA.
Model stability is measured by \textit{forgetting}, defined as the average gap between each task’s best historical accuracy and its current accuracy. Lower forgetting indicates better knowledge retention. For clarity, we define \textit{relative forgetting} as the difference from the setting where \(\alpha = 1\).
Plasticity is evaluated by the model’s accuracy on the current task. Similarly, \textit{relative plasticity} is defined as the difference from the plasticity when \(\alpha = 1\).

Figure~\ref{fig:alpha} presents results on 5-, 10-, and 20-task splits of ImageNet-R. As \(\alpha\) increases, forgetting grows (lower stability) while plasticity improves. These results show that \(\alpha\) effectively controls the trade-off between retaining past knowledge and adapting to new tasks, while consistently maintaining better performance than InfLoRA across all settings.

\section{Conclusion}
In this paper, we investigate the problem of continual learning based on pre-trained ViT models and propose the SplitLoRA method. Specifically, we partition the gradient space of previous tasks into a major subspace and a minor subspace and theoretically model the impact of the minor subspace size on stability and plasticity. After simplifying the optimization problem, we compute the optimal minor subspace size during the continual learning process. Finally, we employ random projection to map the minor subspace onto the low-dimensional matrix of LoRA. Experiments on multiple benchmark datasets demonstrate that our method effectively achieves state-of-the-art performance.

\textbf{Limitation.} In estimating the optimal subspace size, we assume that the ratio between the gradients of the new and previous tasks remains constant. While experimental results suggest that this assumption is robust and effective in practice, it may not be the most principled or optimal solution.
\clearpage
\bibliographystyle{unsrt} 
\bibliography{cite}

\begin{thebibliography}{10}

\bibitem{parisi2019continual}
German~I Parisi, Ronald Kemker, Jose~L Part, Christopher Kanan, and Stefan Wermter.
\newblock Continual lifelong learning with neural networks: A review.
\newblock {\em Neural Networks}, 113:54--71, 2019.

\bibitem{hu2021lora}
Edward~J Hu, Phillip Wallis, Zeyuan Allen-Zhu, Yuanzhi Li, Shean Wang, Lu~Wang, Weizhu Chen, et~al.
\newblock Lora: Low-rank adaptation of large language models.
\newblock In {\em International Conference on Learning Representations}, 2022.

\bibitem{houlsby2019parameter}
Neil Houlsby, Andrei Giurgiu, Stanislaw Jastrzebski, Bruna Morrone, Quentin De~Laroussilhe, Andrea Gesmundo, Mona Attariyan, and Sylvain Gelly.
\newblock Parameter-efficient transfer learning for nlp.
\newblock In {\em Proceedings of the International Conference on Machine Learning}, pages 2790--2799, 2019.

\bibitem{DBLP:conf/eccv/JiaTCCBHL22}
Menglin Jia, Luming Tang, Bor{-}Chun Chen, Claire Cardie, Serge~J. Belongie, Bharath Hariharan, and Ser{-}Nam Lim.
\newblock Visual prompt tuning.
\newblock In {\em Proceedings of the European Conference on Computer Vision}, pages 709--727, 2022.

\bibitem{wang2022learning}
Zifeng Wang, Zizhao Zhang, Chen-Yu Lee, Han Zhang, Ruoxi Sun, Xiaoqi Ren, Guolong Su, Vincent Perot, Jennifer Dy, and Tomas Pfister.
\newblock Learning to prompt for continual learning.
\newblock In {\em Proceedings of the IEEE/CVF Conference on Computer Vision and Pattern Recognition}, pages 139--149, 2022.

\bibitem{smith2023coda}
James~Seale Smith, Leonid Karlinsky, Vyshnavi Gutta, Paola Cascante-Bonilla, Donghyun Kim, Assaf Arbelle, Rameswar Panda, Rogerio Feris, and Zsolt Kira.
\newblock Coda-prompt: Continual decomposed attention-based prompting for rehearsal-free continual learning.
\newblock In {\em Proceedings of the IEEE/CVF Conference on Computer Vision and Pattern Recognition}, pages 11909--11919, 2023.

\bibitem{gao2023unified}
Qiankun Gao, Chen Zhao, Yifan Sun, Teng Xi, Gang Zhang, Bernard Ghanem, and Jian Zhang.
\newblock A unified continual learning framework with general parameter-efficient tuning.
\newblock In {\em Proceedings of the IEEE/CVF International Conference on Computer Vision}, pages 11449--11459, 2023.

\bibitem{liang2024inflora}
Yan-Shuo Liang and Wu-Jun Li.
\newblock Inflora: Interference-free low-rank adaptation for continual learning.
\newblock In {\em Proceedings of the IEEE/CVF Conference on Computer Vision and Pattern Recognition}, pages 23638--23647, 2024.

\bibitem{lu2024visual}
Yue Lu, Shizhou Zhang, De~Cheng, Yinghui Xing, Nannan Wang, Peng Wang, and Yanning Zhang.
\newblock Visual prompt tuning in null space for continual learning.
\newblock {\em arXiv preprint arXiv:2406.05658}, 2024.

\bibitem{jiang2024taia}
Shuyang Jiang, Yusheng Liao, Ya~Zhang, Yanfeng Wang, and Yu~Wang.
\newblock Taia: Large language models are out-of-distribution data learners.
\newblock {\em arXiv preprint arXiv:2405.20192}, 2024.

\bibitem{dai2021knowledge}
Damai Dai, Li~Dong, Yaru Hao, Zhifang Sui, Baobao Chang, and Furu Wei.
\newblock Knowledge neurons in pretrained transformers.
\newblock {\em arXiv preprint arXiv:2104.08696}, 2021.

\bibitem{geva2020transformer}
Mor Geva, Roei Schuster, Jonathan Berant, and Omer Levy.
\newblock Transformer feed-forward layers are key-value memories.
\newblock {\em arXiv preprint arXiv:2012.14913}, 2020.

\bibitem{DBLP:conf/emnlp/LesterAC21}
Brian Lester, Rami Al{-}Rfou, and Noah Constant.
\newblock The power of scale for parameter-efficient prompt tuning.
\newblock In Marie{-}Francine Moens, Xuanjing Huang, Lucia Specia, and Scott~Wen{-}tau Yih, editors, {\em Proceedings of the Conference on Empirical Methods in Natural Language Processing}, pages 3045--3059, 2021.

\bibitem{DBLP:conf/acl/LiL20}
Xiang~Lisa Li and Percy Liang.
\newblock Prefix-tuning: Optimizing continuous prompts for generation.
\newblock In {\em Proceedings of the Annual Meeting of the Association for Computational Linguistics}, pages 4582--4597, 2021.

\bibitem{zaken2022bitfit}
Elad~Ben Zaken, Yoav Goldberg, and Shauli Ravfogel.
\newblock Bitfit: Simple parameter-efficient fine-tuning for transformer-based masked language-models.
\newblock In {\em Proceedings of the Annual Meeting of the Association for Computational Linguistics (Short Papers)}, pages 1--9, 2022.

\bibitem{fu2022adapterbias}
Chin-Lun Fu, Zih-Ching Chen, Yun-Ru Lee, and Hung-Yi Lee.
\newblock Adapterbias: Parameter-efficient token-dependent representation shift for adapters in nlp tasks.
\newblock In {\em Findings of the Association for Computational Linguistics}, pages 2608--2621, 2022.

\bibitem{DBLP:conf/nips/MahabadiHR21}
Rabeeh~Karimi Mahabadi, James Henderson, and Sebastian Ruder.
\newblock Compacter: Efficient low-rank hypercomplex adapter layers.
\newblock In {\em Advances in Neural Information Processing Systems}, pages 1022--1035, 2021.

\bibitem{chen2022adaptformer}
Shoufa Chen, Chongjian Ge, Zhan Tong, Jiangliu Wang, Yibing Song, Jue Wang, and Ping Luo.
\newblock Adaptformer: Adapting vision transformers for scalable visual recognition.
\newblock {\em Advances in Neural Information Processing Systems}, pages 16664--16678, 2022.

\bibitem{zenke2017continual}
Friedemann Zenke, Ben Poole, and Surya Ganguli.
\newblock Continual learning through synaptic intelligence.
\newblock In {\em ICML}, pages 3987--3995, 2017.

\bibitem{jung2020continual}
Sangwon Jung, Hongjoon Ahn, Sungmin Cha, and Taesup Moon.
\newblock Continual learning with node-importance based adaptive group sparse regularization.
\newblock {\em Advances in Neural Information Processing Systems}, pages 3647--3658, 2020.

\bibitem{aljundi2018memory}
Rahaf Aljundi, Francesca Babiloni, Mohamed Elhoseiny, Marcus Rohrbach, and Tinne Tuytelaars.
\newblock Memory aware synapses: Learning what (not) to forget.
\newblock In {\em ECCV}, pages 139--154, 2018.

\bibitem{kirkpatrick2017overcoming}
James Kirkpatrick, Razvan Pascanu, Neil Rabinowitz, Joel Veness, Guillaume Desjardins, Andrei~A Rusu, Kieran Milan, John Quan, Tiago Ramalho, Agnieszka Grabska-Barwinska, et~al.
\newblock Overcoming catastrophic forgetting in neural networks.
\newblock {\em PNAS}, 114(13):3521--3526, 2017.

\bibitem{DBLP:conf/nips/AljundiBTCCLP19}
Rahaf Aljundi, Eugene Belilovsky, Tinne Tuytelaars, Laurent Charlin, Massimo Caccia, Min Lin, and Lucas Page{-}Caccia.
\newblock Online continual learning with maximal interfered retrieval.
\newblock In {\em Advances in Neural Information Processing Systems}, pages 11849--11860, 2019.

\bibitem{DBLP:conf/nips/AljundiLGB19}
Rahaf Aljundi, Min Lin, Baptiste Goujaud, and Yoshua Bengio.
\newblock Gradient based sample selection for online continual learning.
\newblock In {\em Advances in Neural Information Processing Systems}, pages 11816--11825, 2019.

\bibitem{sun2022exploring}
Qing Sun, Fan Lyu, Fanhua Shang, Wei Feng, and Liang Wan.
\newblock Exploring example influence in continual learning.
\newblock {\em Advances in Neural Information Processing Systems}, pages 27075--27086, 2022.

\bibitem{DBLP:conf/nips/LiangL23}
Yan{-}Shuo Liang and Wu{-}Jun Li.
\newblock Loss decoupling for task-agnostic continual learning.
\newblock In Alice Oh, Tristan Naumann, Amir Globerson, Kate Saenko, Moritz Hardt, and Sergey Levine, editors, {\em Advances in Neural Information Processing Systems}, 2023.

\bibitem{rusu2016progressive}
Andrei~A Rusu, Neil~C Rabinowitz, Guillaume Desjardins, Hubert Soyer, James Kirkpatrick, Koray Kavukcuoglu, Razvan Pascanu, and Raia Hadsell.
\newblock Progressive neural networks.
\newblock {\em arXiv preprint arXiv:1606.04671}, 2016.

\bibitem{DBLP:conf/nips/Hung0WCCC19}
Steven C.~Y. Hung, Cheng{-}Hao Tu, Cheng{-}En Wu, Chien{-}Hung Chen, Yi{-}Ming Chan, and Chu{-}Song Chen.
\newblock Compacting, picking and growing for unforgetting continual learning.
\newblock In {\em Advances in Neural Information Processing Systems}, pages 13647--13657, 2019.

\bibitem{li2019learn}
Xilai Li, Yingbo Zhou, Tianfu Wu, Richard Socher, and Caiming Xiong.
\newblock Learn to grow: A continual structure learning framework for overcoming catastrophic forgetting.
\newblock In {\em Proceedings of the International Conference on Machine Learning}, pages 3925--3934, 2019.

\bibitem{zeng2019continual}
Guanxiong Zeng, Yang Chen, Bo~Cui, and Shan Yu.
\newblock Continual learning of context-dependent processing in neural networks.
\newblock {\em Nature Machine Intelligence}, 1(8):364--372, 2019.

\bibitem{farajtabar2020orthogonal}
Mehrdad Farajtabar, Navid Azizan, Alex Mott, and Ang Li.
\newblock Orthogonal gradient descent for continual learning.
\newblock In {\em AISTATS}, pages 3762--3773, 2020.

\bibitem{saha2021gradient}
Gobinda Saha, Isha Garg, and Kaushik Roy.
\newblock Gradient projection memory for continual learning.
\newblock In {\em ICLR}, 2021.

\bibitem{lin2022trgp}
Sen Lin, Li~Yang, Deliang Fan, and Junshan Zhang.
\newblock Trgp: Trust region gradient projection for continual learning.
\newblock {\em arXiv preprint arXiv:2202.02931}, 2022.

\bibitem{he2022masked}
Kaiming He, Xinlei Chen, Saining Xie, Yanghao Li, Piotr Doll{\'a}r, and Ross Girshick.
\newblock Masked autoencoders are scalable vision learners.
\newblock In {\em Proceedings of the IEEE/CVF Conference on Computer Vision and Pattern Recognition}, pages 16000--16009, 2022.

\bibitem{dosovitskiy2020image}
Alexey Dosovitskiy, Lucas Beyer, Alexander Kolesnikov, Dirk Weissenborn, Xiaohua Zhai, Thomas Unterthiner, Mostafa Dehghani, Matthias Minderer, Georg Heigold, Sylvain Gelly, et~al.
\newblock An image is worth 16x16 words: Transformers for image recognition at scale.
\newblock In {\em International Conference on Learning Representations}, 2021.

\bibitem{DBLP:conf/naacl/DevlinCLT19}
Jacob Devlin, Ming{-}Wei Chang, Kenton Lee, and Kristina Toutanova.
\newblock {BERT:} pre-training of deep bidirectional transformers for language understanding.
\newblock In {\em Proceedings of the Conference of the North American Chapter of the Association for Computational Linguistics}, pages 4171--4186, 2019.

\bibitem{boschini2022transfer}
Matteo Boschini, Lorenzo Bonicelli, Angelo Porrello, Giovanni Bellitto, Matteo Pennisi, Simone Palazzo, Concetto Spampinato, and Simone Calderara.
\newblock Transfer without forgetting.
\newblock In {\em Proceedings of the European Conference on Computer Vision}, pages 692--709, 2022.

\bibitem{zheng2023preventing}
Zangwei Zheng, Mingyuan Ma, Kai Wang, Ziheng Qin, Xiangyu Yue, and Yang You.
\newblock Preventing zero-shot transfer degradation in continual learning of vision-language models.
\newblock In {\em Proceedings of the IEEE/CVF International Conference on Computer Vision}, pages 19068--19079, 2023.

\bibitem{khan2023introducing}
Muhammad Gul Zain~Ali Khan, Muhammad~Ferjad Naeem, Luc Van~Gool, Didier Stricker, Federico Tombari, and Muhammad~Zeshan Afzal.
\newblock Introducing language guidance in prompt-based continual learning.
\newblock In {\em Proceedings of the IEEE/CVF International Conference on Computer Vision}, pages 11463--11473, 2023.

\bibitem{yu2024personalized}
Hao Yu, Xin Yang, Xin Gao, Yan Kang, Hao Wang, Junbo Zhang, and Tianrui Li.
\newblock Personalized federated continual learning via multi-granularity prompt.
\newblock In {\em Proceedings of the 30th ACM SIGKDD Conference on Knowledge Discovery and Data Mining}, pages 4023--4034, 2024.

\bibitem{wang2023olora}
Xiao Wang, Tianze Chen, Qiming Ge, Han Xia, Rong Bao, Rui Zheng, Qi~Zhang, Tao Gui, and Xuanjing Huang.
\newblock Orthogonal subspace learning for language model continual learning.
\newblock {\em arXiv preprint arXiv:2310.14152}, 2023.

\bibitem{jin2021gradient}
Xisen Jin, Arka Sadhu, Junyi Du, and Xiang Ren.
\newblock Gradient-based editing of memory examples for online task-free continual learning.
\newblock {\em NeurIPS}, 34:29193--29205, 2021.

\bibitem{zhang2023lorafa}
Longteng Zhang, Lin Zhang, Shaohuai Shi, Xiaowen Chu, and Bo~Li.
\newblock Lora-fa: Memory-efficient low-rank adaptation for large language models fine-tuning.
\newblock {\em arXiv preprint arXiv:2308.03303}, 2023.

\bibitem{wang2022dualprompt}
Zifeng Wang, Zizhao Zhang, Sayna Ebrahimi, Ruoxi Sun, Han Zhang, Chen-Yu Lee, Xiaoqi Ren, Guolong Su, Vincent Perot, Jennifer Dy, et~al.
\newblock Dualprompt: Complementary prompting for rehearsal-free continual learning.
\newblock In {\em ECCV}, pages 631--648, 2022.

\bibitem{wang2023hierarchical}
Liyuan Wang, Jingyi Xie, Xingxing Zhang, Mingyi Huang, Hang Su, and Jun Zhu.
\newblock Hierarchical decomposition of prompt-based continual learning: Rethinking obscured sub-optimality.
\newblock {\em arXiv preprint arXiv:2310.07234}, 2023.

\bibitem{kurniawan2024evolving}
Muhammad~Rifki Kurniawan, Xiang Song, Zhiheng Ma, Yuhang He, Yihong Gong, Yang Qi, and Xing Wei.
\newblock Evolving parameterized prompt memory for continual learning.
\newblock In {\em AAAI}, volume~38, pages 13301--13309, 2024.

\bibitem{jiao2024vector}
Li~Jiao, Qiuxia Lai, Yu~Li, and Qiang Xu.
\newblock Vector quantization prompting for continual learning.
\newblock {\em arXiv preprint arXiv:2410.20444}, 2024.

\bibitem{2025slora}
Anonymous.
\newblock S-lora: Scalable low-rank adaptation for class incremental learning.
\newblock {\em Under review at International Conference on Learning Representations (ICLR)}, 2025.

\bibitem{hendrycks2021many}
Dan Hendrycks, Steven Basart, Norman Mu, Saurav Kadavath, Frank Wang, Evan Dorundo, Rahul Desai, Tyler Zhu, Samyak Parajuli, Mike Guo, et~al.
\newblock The many faces of robustness: A critical analysis of out-of-distribution generalization.
\newblock In {\em Proceedings of the IEEE/CVF International Conference on Computer Vision}, pages 8340--8349, 2021.

\bibitem{krizhevsky2009learning}
A~Krizhevsky.
\newblock Learning multiple layers of features from tiny images.
\newblock {\em Master's thesis, University of Tront}, 2009.

\bibitem{peng2019moment}
Xingchao Peng, Qinxun Bai, Xide Xia, Zijun Huang, Kate Saenko, and Bo~Wang.
\newblock Moment matching for multi-source domain adaptation.
\newblock In {\em Proceedings of the IEEE/CVF International Conference on Computer Vision}, pages 1406--1415, 2019.

\bibitem{smith2024continual}
James~Seale Smith, Yen-Chang Hsu, Lingyu Zhang, Ting Hua, Zsolt Kira, Yilin Shen, and Hongxia Jin.
\newblock Continual diffusion: Continual customization of text-to-image diffusion with c-lora.
\newblock {\em archiveEprint: 2304.06027}, 2024.

\bibitem{zhang2023slca}
Gengwei Zhang, Liyuan Wang, Guoliang Kang, Ling Chen, and Yunchao Wei.
\newblock Slca: Slow learner with classifier alignment for continual learning on a pre-trained model.
\newblock In {\em ICCV}, 2023.

\bibitem{ridnik2021imagenet}
Tal Ridnik, Emanuel Ben-Baruch, Asaf Noy, and Lihi Zelnik-Manor.
\newblock Imagenet-21k pretraining for the masses.
\newblock In {\em NeurIPS}, 2021.

\bibitem{loshchilov2018decoupled}
Ilya Loshchilov and Frank Hutter.
\newblock Decoupled weight decay regularization.
\newblock In {\em ICLR}, 2018.

\bibitem{zhou2022image}
Jinghao Zhou, Chen Wei, Huiyu Wang, Wei Shen, Cihang Xie, Alan Yuille, and Tao Kong.
\newblock Image bert pre-training with online tokenizer.
\newblock In {\em ICLR}, 2022.

\bibitem{caron2021emerging}
Mathilde Caron, Hugo Touvron, Ishan Misra, Herv{\'e} J{\'e}gou, Julien Mairal, Piotr Bojanowski, and Armand Joulin.
\newblock Emerging properties in self-supervised vision transformers.
\newblock In {\em Proceedings of the IEEE/CVF International Conference on Computer Vision}, pages 9650--9660, 2021.

\bibitem{zhou2024continual}
Da-Wei Zhou, Hai-Long Sun, Jingyi Ning, Han-Jia Ye, and De-Chuan Zhan.
\newblock Continual learning with pre-trained models: A survey.
\newblock {\em arXiv preprint arXiv:2401.16386}, 2024.

\end{thebibliography}
\clearpage
%%%%%%%%%%%%%%%%%%%%%%%%%%%%%%%%%%%%%%%%%%%%%%%%%%%%%%%%%%%%

\appendix

%==============================
\section{Appendix}

\subsection{Proof of Proposition~\ref{thm:upper_bound}}\label{appendix:thm:upper_bound}
Before proving Proposition~\ref{thm:upper_bound}, we first establish a supporting lemma.

\begin{lemma}[Gradient Preservation under Orthogonal Updates]
\label{lemma:gradient_preservation}
Let \( L_j: \mathbb{R}^d \to \mathbb{R} \) be a twice-differentiable loss function corresponding to task \( j \), and let \( W_j \) be the model parameters after completing task \( j \). Suppose at step \( t > j \), the update direction \( \tilde{g}_t \) for task \( t \) satisfies
$\langle \nabla L_j(W_j), \tilde{g}_t \rangle = 0.$
The updated parameter is given by:
$W_t = W_j - \eta \tilde{g}_t.$
Further assume that the second-order term \( \eta H_j \tilde{g}_t \) in the Taylor expansion of \( \nabla L_j \) can be ignored. Then, the gradient of task \( j \) remains unchanged:
\[
\nabla L_j(W_t) = \nabla L_j(W_j).
\]
\end{lemma}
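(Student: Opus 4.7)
\textbf{Proof plan for Lemma~\ref{lemma:gradient_preservation}.} The plan is to work entirely from the first-order Taylor expansion of the \emph{gradient} $\nabla L_j$ around $W_j$, since the conclusion is about $\nabla L_j(W_t)$ rather than $L_j(W_t)$ itself. Writing $W_t - W_j = -\eta\, \tilde{g}_t$, I would expand
\begin{align*}
    \nabla L_j(W_t) \;=\; \nabla L_j(W_j) \;+\; H_j (W_t - W_j) \;+\; O\!\left(\|W_t - W_j\|^{2}\right),
\end{align*}
where $H_j = \nabla^{2} L_j(W_j)$ is the Hessian at $W_j$; existence of $H_j$ follows from the assumed twice-differentiability. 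Substituting the update rule gives the single first-order correction $-\eta H_j \tilde{g}_t$, which is precisely the term the lemma's hypothesis permits us to neglect.

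Next I would invoke that hypothesis explicitly: the assumption that $\eta H_j \tilde{g}_t$ can be ignored collapses the Taylor expansion to $\nabla L_j(W_t) = \nabla L_j(W_j)$, up to higher-order remainders in $\eta$ which are discarded under the same approximation regime. The role of the orthogonality condition $\langle \nabla L_j(W_j), \tilde{g}_t \rangle = 0$ is not to drive the gradient-level calculation, but to ensure consistency with the companion zeroth-order statement $L_j(W_t) = L_j(W_j)$ that will be used when this lemma is plugged into the proof of Proposition~\ref{thm:upper_bound}; I would remark on this so the reader sees why the orthogonality hypothesis is stated even though it does not literally appear in the algebraic steps above.

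\textbf{Expected obstacle.} The main conceptual (rather than computational) difficulty is justifying the negligibility of the $\eta H_j \tilde{g}_t$ term in a way that does not look circular. The safest route is to frame the lemma explicitly as a first-order approximation: the conclusion holds to leading order in $\eta$, and the orthogonality condition guarantees that the \emph{loss} change $L_j(W_t) - L_j(W_j)$ is itself $O(\eta^{2})$, so ignoring Hessian-level corrections to $\nabla L_j$ is consistent with the approximation order used throughout the rest of the analysis. No further estimates are needed, and the proof fits in a few lines once the Taylor expansion and the approximation convention are stated cleanly.
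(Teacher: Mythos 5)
Your proof is correct and follows essentially the same route as the paper: a first-order Taylor expansion of $\nabla L_j$ around $W_j$, substitution of $W_t - W_j = -\eta\,\tilde{g}_t$ to isolate the $-\eta H_j \tilde{g}_t$ correction, and then the assumed negligibility of that term to collapse the expansion. Your added remark that the orthogonality hypothesis plays no role in the gradient-level algebra (and is really there for consistency with the zeroth-order claim about $L_j$) is a correct and useful clarification that the paper's proof leaves implicit.
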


\begin{proof}
Since \( L_j \) is twice-differentiable, we apply the first-order Taylor expansion of the gradient at point \( W_j \) in the direction of \( \tilde{g}_t \):
\[
\nabla L_j(W_t) = \nabla L_j(W_j - \eta \tilde{g}_t) = \nabla L_j(W_j) - \eta H_j \tilde{g}_t + o(\eta).
\]
Now, under the assumption that \( \eta H_j \tilde{g}_t \) is negligible (i.e., small learning rate and low curvature), we ignore the second-order term:

\[
\nabla L_j(W_t) = \nabla L_j(W_j).
\]
\end{proof}

%==============================
Based on the Lemma ~\ref{lemma:gradient_preservation}, we denote $\nabla L_{i}(W_{j})$ as $\mathbf{G}_{i}$. 
Next, we provide the proof of Proposition~\ref{thm:upper_bound}.

\textbf{Proposition 4.1.} Assume the loss $\mathcal{L}_i(\mathbf{W})$ is $L$-smooth for all $i \in \{1, \dots, t\}$. Let the model update be $\mathbf{W}_t = \mathbf{W}_{t-1} + \Delta \mathbf{W}_t$. Then:
\[
\sum_{i=1}^{t} \left(\mathcal{L}_{i}(\mathbf{W}_{t}) - \mathcal{L}_{i}(\mathbf{W}_{t-1})\right)
\leq  -(t{-}1) \left \langle \Delta \mathbf{W}_t, \mathbf{G}^{\text{old}}_t \right \rangle - \left \langle \Delta \mathbf{W}_t, \mathbf{G}_t \right \rangle +  \frac{(t{-}1) L}{2}\|\Delta \mathbf{W}_t\|^2_F.
\]

\begin{proof}
By $L$-smoothness of each $\mathcal{L}_i$, we have:
\[
\mathcal{L}_i(\mathbf{W}_t) \leq \mathcal{L}_i(\mathbf{W}_{t-1}) + \langle \nabla \mathcal{L}_i(\mathbf{W}_{t-1}), \Delta \mathbf{W}_t \rangle + \frac{L}{2} \|\Delta \mathbf{W}_t\|^2_F.
\]
Summing over $i = 1$ to $t$:
\[
\sum_{i=1}^{t} \mathcal{L}_i(\mathbf{W}_t) - \sum_{i=1}^{t} \mathcal{L}_i(\mathbf{W}_{t-1}) 
\leq \sum_{i=1}^{t} \langle \nabla \mathcal{L}_i(\mathbf{W}_{t-1}), \Delta \mathbf{W}_t \rangle + \frac{tL}{2} \|\Delta \mathbf{W}_t\|^2_F.
\]
Let $\mathbf{G}_t^{\text{old}} = \frac{1}{t-1}\sum_{i=1}^{t-1} \nabla \mathcal{L}_i(\mathbf{W}_{t-1})$, then:
\[
\sum_{i=1}^{t-1} \langle \nabla \mathcal{L}_i(\mathbf{W}_{t-1}), \Delta \mathbf{W}_t \rangle = (t-1) \langle \mathbf{G}_t^{\text{old}}, \Delta \mathbf{W}_t \rangle.
\]
Substituting back gives:
\[
\sum_{i=1}^{t} \left(\mathcal{L}_{i}(\mathbf{W}_{t}) - \mathcal{L}_{i}(\mathbf{W}_{t-1})\right) 
\leq  -(t{-}1) \langle \Delta \mathbf{W}_t, \mathbf{G}_t^{\text{old}} \rangle - \langle \Delta \mathbf{W}_t, \mathbf{G}_t \rangle + \frac{(t{-}1)L}{2} \|\Delta \mathbf{W}_t\|^2_F.
\]
\end{proof}

%==============================
\subsection{Proof of Theorem~\ref{thm:upper_bound_forandcur}}\label{appendix:thm:upper_bound_forandcur}

Let \( \mathbf{W}_{t-1} \) denote the weight matrix of a linear layer in the model, updated as  $\mathbf{W}_t = \mathbf{W}_{t-1} + \Delta \mathbf{\hat W}_{t}= \mathbf{W}_{t-1} +  \mathbf{U^k}_t \mathbf{U^k}_t^{\top} \Delta \mathbf{W}_{t}$ . Since the update direction of the new task is unknown, we assume that it is uniformly distributed across all directions. that is to say, 
 \( \Delta \mathbf{W}_t \) has the same expected projection value across different feature directions of \( \mathbf{G}_t \), we provide the expected values of the stability loss :
\begin{align}
\mathbb{E}[\mathcal{L}_{t}^{S}(\mathbf{W}_{t})] = -(t-1) \epsilon_t(k_t) \left <\Delta \mathbf{W}_{t}, \mathbf{G}^\text{old}_{t} \right >,
\end{align}
and the plasticity loss:
\begin{align}
\mathbb{E}[\mathcal{L}^{P}_{t}(\mathbf{W}_{t})] = -\frac{k_t}{d} \left <\Delta \mathbf{W}_{t}, \mathbf{G}_{t} \right >.
\end{align}

\begin{proof}
Stability Loss:

By definition, the projected update is:
\[
\Delta \hat{\mathbf{W}}_t = \mathbf{U}_t^k \mathbf{U}_t^{k\top} \Delta \mathbf{W}_t.
\]

Thus, the expected stability loss is:
\begin{align*}
\mathbb{E}[\mathcal{L}_t^S] &= -(t{-}1) \mathbb{E}[\langle \Delta \hat{\mathbf{W}}_t, \mathbf{G}_t^{\text{old}} \rangle] \\
&= -(t{-}1) \mathbb{E}[\langle \mathbf{U}_t^k \mathbf{U}_t^{k\top} \Delta \mathbf{W}_t, \mathbf{G}_t^{\text{old}} \rangle] \\
&= -(t{-}1) \mathbb{E}[\text{Tr}(\Delta \mathbf{W}_t^\top \mathbf{U}_t^k \mathbf{U}_t^{k\top} \mathbf{G}_t^{\text{old}})].
\end{align*}

Let \( \mathbf{G}_t^{\text{old}} = \sum_{i=1}^d \sigma_i \mathbf{u}_i \mathbf{v}_i^\top \) be the SVD. Then,
\[
\mathbf{U}_t^k \mathbf{U}_t^{k\top} \mathbf{G}_t^{\text{old}} = \sum_{i=d-k_t+1}^d \sigma_i \mathbf{u}_i \mathbf{v}_i^\top.
\]

So:
\[
\mathbb{E}[\mathcal{L}_t^S] = -(t-1) \sum_{i=d-k_t+1}^{d} \sigma_i \cdot \mathbb{E}[\langle \Delta \mathbf{W}_t, \mathbf{u}_i \mathbf{v}_i^\top \rangle_F].
\]

Under the uniform distribution assumption, all expected projections are equal:
\[
\mathbb{E}[\langle \Delta \mathbf{W}_t, \mathbf{u}_i \mathbf{v}_i^\top \rangle_F] = c, \quad \forall i.
\]

Then:
\[
\mathbb{E}[\mathcal{L}_t^S] = -(t-1) \cdot c \cdot \sum_{i=d-k_t+1}^{d} \sigma_i.
\]

Also,
\[
\langle \Delta \mathbf{W}_t, \mathbf{G}_t^{\text{old}} \rangle = \sum_{i=1}^{d} \sigma_i \cdot \langle \Delta \mathbf{W}_t, \mathbf{u}_i \mathbf{v}_i^\top \rangle_F = c \cdot \sum_{i=1}^d \sigma_i,
\]
so:
\[
c = \frac{\langle \Delta \mathbf{W}_t, \mathbf{G}_t^{\text{old}} \rangle}{\sum_{i=1}^d \sigma_i}.
\]

Thus,
\[
\mathbb{E}[\mathcal{L}_t^S] = -(t-1) \cdot \epsilon_t(k_t) \cdot \langle \Delta \mathbf{W}_t, \mathbf{G}_t^{\text{old}} \rangle.
\]

---

Plasticity Loss:

The plasticity loss is:
\[
\mathbb{E}[\mathcal{L}_t^P] = -\mathbb{E}[\langle \Delta \hat{\mathbf{W}}_t, \mathbf{G}_t \rangle] = -\mathbb{E}[\langle \mathbf{U}_t^k \mathbf{U}_t^{k\top} \Delta \mathbf{W}_t, \mathbf{G}_t \rangle].
\]

Let \( \alpha_i = \langle \Delta \mathbf{W}_t, \mathbf{u}_i \rangle \), \( \beta_i = \langle \mathbf{G}_t, \mathbf{u}_i \rangle \). Then:
\[
\mathbb{E}[\mathcal{L}_t^P] = - \mathbb{E}\left[\sum_{i=1}^{k_t} \alpha_i \beta_i \right].
\]

Under the uniform assumption, the expected contribution over any direction is \( \frac{1}{d} \), hence:
\[
\mathbb{E}[\mathcal{L}_t^P] = - \frac{k_t}{d} \cdot \langle \Delta \mathbf{W}_t, \mathbf{G}_t \rangle.
\]
\end{proof}

\subsection{Evaluation metrics} 
\label{sec:eval_metrics}

To evaluate continual learning performance, we track the average classification accuracy over all classes encountered so far at the end of each task’s training following \cite{jiao2024vector}. We denote by \(A_{ij}\) the average accuracy on the \(i\)-th task after training the \(j\)-th task. Below, we provide formal definitions for two key metrics: FAA and CAA.

\paragraph{(i) Final Average Accuracy (FAA).}
FAA measures the overall performance after learning all tasks, defined as:
\begin{equation}
\text{FAA} = \frac{1}{T} \sum_{i=1}^{T} A_{iT},
\end{equation}
where \(T\) is the total number of tasks and \(A_{iT}\) is the accuracy for task \(i\) after completing task \(T\). A larger FAA indicates a stronger ability to learn while minimizing forgetting. In some literature, FAA is also referred to as “Last-Acc.”

\paragraph{(ii) Cumulative Average Accuracy (CAA).}
CAA is the average of the FAA values computed after each task is learned, given by:
\begin{equation}
\text{CAA} = \frac{1}{T} \sum_{j=1}^{T} \frac{1}{j} \sum_{i=1}^{j} A_{ij}.
\end{equation}
It captures the overall performance at every incremental step. This metric is sometimes referred to as “Inc-Acc.”
\subsection{The size of minor subspace evolves during training.} 
We tracked the evolution of the minor subspace size throughout training on ImageNet-R with 20 tasks. As shown in the figure ~\ref{fig:ik}, as the number of tasks increases, model stability becomes more critical, leading to a progressively smaller minor subspace. Furthermore, when comparing different layers of ViT, the minor subspace is larger in shallower layers and gradually decreases as the layer depth increases. This suggests that changes in the deep-layer parameters have a greater impact on model stability.

When the value of $\alpha$ varies, the model's learning space changes significantly; nevertheless, the model is still able to learn tasks effectively in all cases. Here, we introduce a simple experiment to explain it.
\begin{table}[h]
\centering
\caption{Average accuracy of tasks 2–20 under different fine-tuning strategies.}
\label{tab:partial_finetune}
\begin{tabular}{l c}
\toprule
{Method} & {Avg. Acc (Tasks 2--20)} \\
\midrule
Only head & 66.08 \\
Only head and the first task & 74.75 \\
SplitLoRA &  81.47\\
\bottomrule
\end{tabular}
\end{table}
As Tab. ~\ref{tab:partial_finetune} shows, "Only head" means training only the classifier head for each task; "Only head and the first task" means training the first task's LoRA and the classifiers for all tasks. SplitLoRA is used for comparison. It can be observed that even when only the classifier is trained, the model can still learn effectively, and when subsequent tasks are fine-tuned on the first task's LoRA, performance improves. The knowledge contained in the pre-trained model and the beneficial knowledge from old tasks help the new task's learning. 

\begin{figure*}[ht] % 控制浮动位置为当前位置或顶部
    \centering % 图片居中
    \includegraphics[width=1\textwidth]{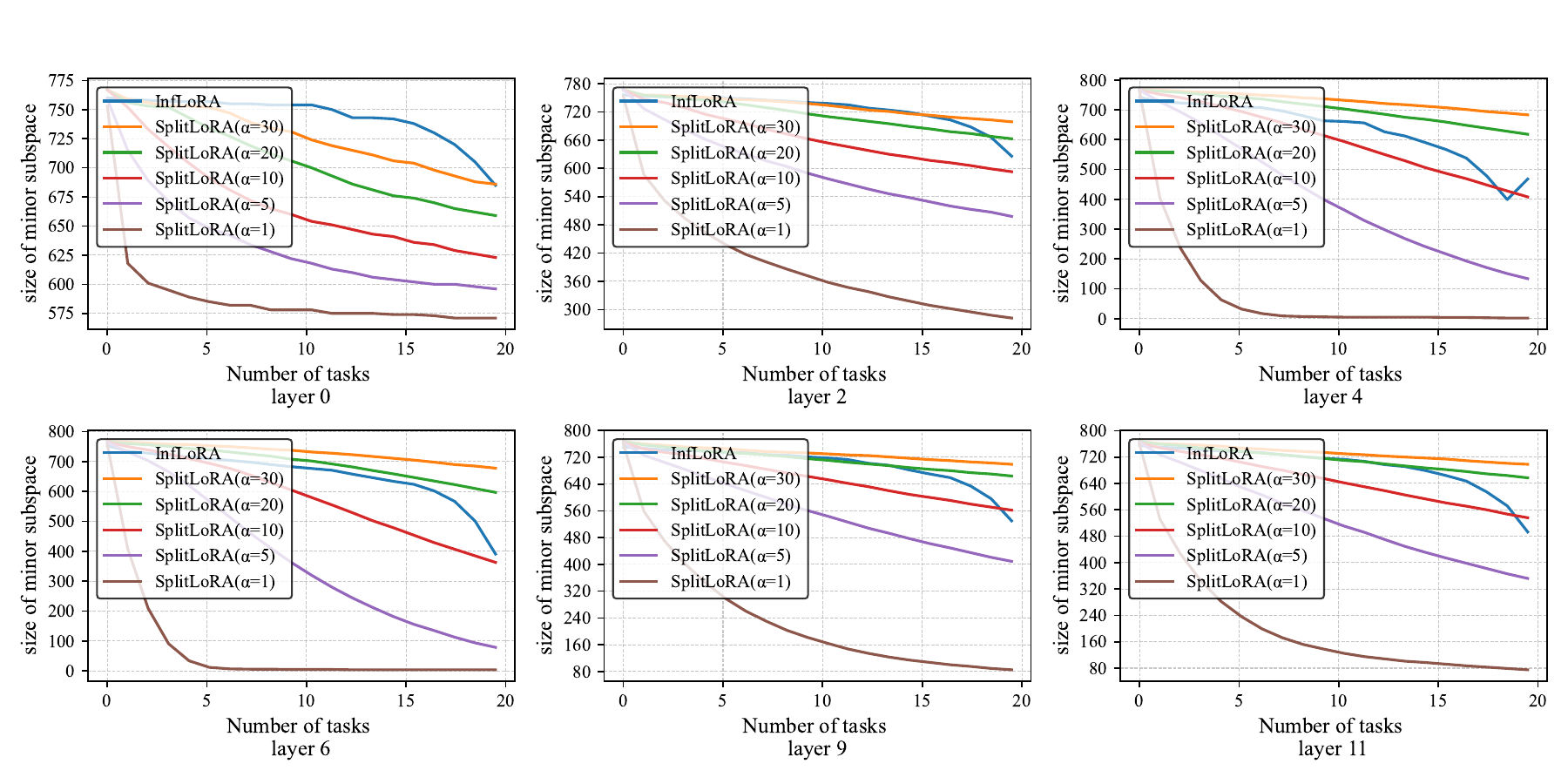} 

    \caption{We recorded the evolution of the minor subspace size during training on ImageNet-R with 20 tasks.} % 图片标题
    \label{fig:ik} % 标签，用于引用
    
\end{figure*}

\subsection{More results on other benchmark.} 

\begin{table*}[t]
\centering
\caption{Comparison with state-of-the-art methods on multiple benchmarks. We report CAA and FAA (\%) on base and incremental sessions.}
\label{tab:overall_results}
\resizebox{\textwidth}{!}{
\begin{tabular}{lccccccc|c}
\toprule
\textbf{Method} & \textbf{CIFAR B0 Inc5} & \textbf{CUB B0 Inc10} & \textbf{IN-R B0 Inc5} & \textbf{IN-A B0 Inc20} & \textbf{Obj B0 Inc10} & \textbf{Omni B0 Inc30} & \textbf{VTAB B0 Inc10} & \textbf{Average} \\
\midrule
L2P & 85.94 / 79.93 & 67.05 / 56.25 & 66.53 / 59.22 & 49.39 / 41.71 & 63.78 / 52.19 & 73.36 / 64.69 & 77.11 / 77.10 & 65.30 \\
DualPrompt & 87.87 / 81.15 & 77.47 / 66.54 & 63.31 / 55.22 & 53.71 / 41.67 & 59.27 / 49.33 & 73.92 / 65.52 & 83.36 / 81.23 & 67.11 \\
CODA-Prompt & 89.11 / 81.96 & 84.00 / 73.37 & 64.42 / 55.08 & 53.54 / 42.73 & 66.07 / 53.29 & 77.03 / 68.09 & 83.90 / 83.02 & 69.68 \\
DAP & 94.54 / 90.62 & 94.76 / 94.63 & 80.61 / 74.76 & 54.39 / 46.32 & 72.08 / 59.51 & 86.44 / 80.65 & 84.65 / 84.64 & 78.47 \\
DAP w/o BI & 68.07 / 58.16 & 65.27 / 52.05 & 50.40 / 37.99 & 34.48 / 21.84 & 50.47 / 37.55 & 65.43 / 52.53 & 79.63 / 79.87 & 53.83 \\
SimpleCIL & 87.57 / 81.26 & 92.20 / 86.73 & 62.58 / 54.55 & 59.77 / 48.91 & 65.45 / 53.59 & 79.34 / 73.15 & 85.99 / 84.38 & 72.53 \\
ADAM + VPT-D & 88.46 / 82.17 & 91.02 / 84.99 & 68.79 / 60.48 & 58.48 / 48.52 & 67.83 / 54.65 & 81.05 / 74.47 & 86.59 / 83.06 & 73.61 \\
ADAM + SSF & 87.78 / 81.98 & 91.72 / 86.13 & 68.94 / 60.60 & 61.30 / 50.03 & 69.15 / 56.64 & 80.53 / 74.00 & 85.66 / 81.92 & 74.02 \\
ADAM + Adapter & 90.65 / 85.15 & 92.21 / 86.73 & 72.35 / 64.33 & 60.47 / 49.37 & 67.18 / 55.24 & 80.75 / 74.37 & 85.95 / 84.35 & 74.93 \\
RanPAC & 93.51 / 89.30 & 93.13 / 89.40 & 75.74 / 68.75 & 64.16 / 52.86 & 71.67 / 60.08 & 85.95 / 79.55 & 92.56 / 91.83 & 79.17 \\
EASE & 91.51 / 85.80 & 92.23 / 86.81 & 78.31 / 70.58 & 65.34 / 55.04 & 70.84 / 57.86 & 81.11 / 74.85 & 93.61 / 93.55 & 78.39 \\
HiDe-Prompt & 91.22 / 89.92 & 89.75 / 89.46 & 76.20 / 74.56 & 61.41 / 49.27 & 70.13 / 62.84 & 76.60 / 77.01 & 91.24 / 92.78 & 78.02 \\
ESN & 87.15 / 80.37 & 65.69 / 63.10 & 60.69 / 55.13 & 44.06 / 31.07 & 63.73 / 52.55 & 75.32 / 66.57 & 81.52 / 62.15 & 63.50 \\
\midrule
{SplitLoRA} & {93.11 / 90.84} & {91.52 / 87.46} & {81.74 / 74.81} & {66.11 / 58.22} & {68.60 / 61.33} & {82.30 / 76.86} & {94.39 / 91.95} & \textbf{79.95} \\
\bottomrule
\end{tabular}
}
\end{table*}
We further evaluate the performance of SplitLoRA on another benchmark~\cite{zhou2024continual}.
Tab. ~\ref{tab:overall_results} compares SplitLoRA with state-of-the-art continual learning methods on seven benchmarks. SplitLoRA achieves the best average performance (79.95\%) and consistently ranks top in individual tasks. In particular, it excels on ImageNet-R, Omni, and VTAB, demonstrating strong generalization and knowledge retention. This confirms that SplitLoRA effectively balances stability and plasticity across diverse scenarios.

\end{document}